\documentclass{article}

\usepackage{microtype}
\usepackage{graphicx}
\usepackage{subcaption}
\usepackage{booktabs} % for professional tables
\usepackage[table]{xcolor}

\usepackage{wrapfig}

\usepackage{hyperref}
\usepackage{multirow}
\usepackage{algorithm}
\usepackage{algorithmic}

\usepackage[preprint]{format/neurips_2026}

\usepackage{amsmath}
\usepackage{amssymb}
\usepackage{mathtools}
\usepackage{amsthm}

\usepackage[capitalize,noabbrev]{cleveref}

\theoremstyle{plain}
\newtheorem{theorem}{Theorem}[section]
\newtheorem{proposition}[theorem]{Proposition}
\newtheorem{lemma}[theorem]{Lemma}

\theoremstyle{definition}

\theoremstyle{remark}

\usepackage[disable,textsize=tiny]{todonotes}

\usepackage{placeins} % fix figure place

\title{GAC: Stabilizing Asynchronous RL Training for LLMs via Gradient Alignment Control}

\author{%
Haofeng Xu$^{1,2}$ \quad
Junwei Su$^{1,\dagger}$ \quad
Yukun Tian$^{3,2}$ \quad
Lansong Diao$^{2}$ \quad
Zhengping Qian$^{2}$ \quad
Chuan Wu$^{1,\dagger}$ \\
\small
$^{1}$School of Computing and Data Science, The University of Hong Kong \\
\small
$^{2}$Alibaba Group \quad
$^{3}$Southeast University \\
\small
$^{\dagger}$Corresponding authors. \\
\small
\texttt{junweisu.cs@gmail.com}, \quad \texttt{cwu@cs.hku.hk}
}

\begin{document}

\maketitle

\begin{abstract}
Asynchronous execution is essential for scaling reinforcement learning (RL) to modern large-model workloads, including large language models and AI agents, but it can fundamentally alter RL optimization behavior. While prior work on asynchronous RL focuses on training throughput and distributional correction, we show that naïvely applying asynchrony to policy-gradient updates can induce qualitatively different training dynamics and lead to severe training instability. Through systematic empirical and theoretical analysis, we identify a key signature of this instability: asynchronous training exhibits persistently high cosine similarity between consecutive policy gradients, in contrast to the near-orthogonal updates observed under synchronized training. This \emph{stale-aligned gradient} effect amplifies correlated updates and increases the risk of overshooting and divergence. Motivated by this observation, we propose \textsc{Gradient Alignment Control} (\textsc{GAC}), a simple dynamics-aware stabilization method that regulates asynchronous RL progress along stale-aligned directions via gradient projection. We establish convergence guarantees under bounded staleness and demonstrate empirically that \textsc{GAC} recovers stable, on-policy training dynamics and matches synchronized baselines even at high staleness.
\end{abstract}
\section{Introduction}
\vspace{-4pt}
Reinforcement learning (RL) provides a principled framework for optimizing large language models (LLMs) as decision-makers, where text generation is modeled as a sequence of actions and learning is driven by feedback on complete responses rather than token-level supervision~\citep{zhang2025surveyreinforcementlearninglarge,wang2025reinforcementlearningenhancedllms}. 
In modern alignment pipelines, policy-gradient methods such as Group Relative Policy Optimization (GRPO)~\citep{shao2024deepseekmath} train an LLM by sampling multiple candidate responses to the same prompt and reinforcing those that achieve higher relative reward within the group, directly optimizing response-level quality and reasoning performance. This RL-based fine-tuning paradigm has become central to aligning LLMs with human intent, improving mathematical and logical reasoning, and adapting pretrained models to complex, open-ended tasks~\citep{feng2025group}.

Despite this algorithmic flexibility, the scalability of RL for LLMs is increasingly constrained by \emph{system-level bottlenecks}~\citep{sheng2024hybridflow,zhong2025streamrlscalableheterogeneouselastic}. A typical RL training pipeline consists of several heterogeneous stages—rollout generation, reward computation, and policy optimization—that operate on different time scales and rely on distinct hardware resources. As model sizes grow, enforcing strict synchronization leads to significant resource underutilization~\citep{fu2025areal,wang2025reinforcement}: fully synchronous methods such as GRPO require all workers to complete rollout collection before any update can proceed, exacerbating straggler effects and resulting in poor wall-clock efficiency~\citep{sheng2025laminarscalableasynchronousrl}. To alleviate these issues, modern RL systems increasingly adopt asynchronous training, in which rollout generation and policy optimization proceed concurrently. While essential for scaling, this relaxation fundamentally alters the learning dynamics of policy optimization—introducing new sources of bias and instability that must be carefully understood and controlled.

\paragraph{Limitations and Gaps in Existing Work.}
There has been growing interest in asynchronous RL as a means of addressing scalability challenges, but existing efforts leave a critical gap in understanding the \emph{learning dynamics} induced by asynchrony. Prior work falls into two categories. \emph{Systems-oriented} approaches~\citep{fu2025areal,sheng2025laminarscalableasynchronousrl,lu2025iirollflash} focus on engineering benefits such as mitigating stragglers and maximizing throughput, typically treating the learning algorithm as a black box. \emph{Algorithm-oriented} approaches~\citep{xi2025bapo,zheng2025m2po,li2026a3poacceleratingasynchronousllm} emphasize \emph{distributional corrections} to reduce the mismatch between on-policy and stale-policy distributions. While such corrections can be effective when training dynamics remain stable, they overlook a more fundamental issue: asynchronous and synchronous training can exhibit \emph{qualitatively different training dynamics}. As we demonstrate, naïvely applying asynchronous execution to GRPO-style updates can lead to severe instability and training collapse.

\paragraph{Contributions.} Motivated by these observations, we study asynchronous RL from an underexplored \emph{training-dynamics perspective} and identify a simple yet effective control mechanism that directly targets these dynamics. Our main contributions are:
\begin{enumerate}
    \item \textbf{Characterization of asynchronous training dynamics.}
We conduct a systematic empirical comparison between synchronous and asynchronous RL and uncover two consistent findings: asynchronous training is substantially more prone to \emph{collapse}, and this instability is associated with persistently \emph{high cosine similarity} between consecutive policy gradients—in contrast to the low similarity observed under synchronous training. This alignment explains instability: gradient staleness repeatedly reinforces similar update directions, increasing the risk of overshooting and divergence. We further support this with a theoretical analysis (Theorem~\ref{thm:align_async_0}) showing that stale gradients introduce systematic bias that increases gradient alignment and fundamentally alters convergence behavior.

\item \textbf{A dynamics-aware stabilization method with convergence guarantees.}
We propose \textsc{GAC} (Gradient Alignment Control), a mechanism that stabilizes asynchronous RL by explicitly regulating progress along stale-aligned update directions via adaptive gradient projection. We provide theoretical analysis (Proposition~\ref{prop:bias_reduction_0}) showing that \textsc{GAC} reduces alignment-induced bias and improves convergence stability. \textsc{GAC} is algorithm- and system-agnostic, and can be readily integrated with existing RL algorithms and distributed training systems.

\item \textbf{End-to-end improvements in asynchronous GRPO training.} We evaluate \textsc{GAC} across Qwen3 models (1.7B--8B) and seven mathematical reasoning benchmarks. Under moderate to large staleness, na\"{\i}ve asynchronous GRPO and baselines exhibit unstable dynamics and frequent collapse. In contrast, \textsc{GAC} consistently stabilizes training, restores on-policy-like optimization behavior, and substantially closes the performance gap to synchronized GRPO, while preserving throughput benefits.
\end{enumerate}
\vspace{-4pt}

\section{Preliminaries}
\vspace{-4pt}
\subsection{Group-Relative Policy Optimization (GRPO)}
GRPO is a policy-gradient method designed for reinforcement learning with outcome-level or verifier-based rewards, and is widely used in large language model post-training. GRPO follows the clipped-surrogate optimization framework of Proximal Policy Optimization (PPO)~\citep{schulman2017ppo}, but replaces value-function--based advantages with a group-relative normalization computed over multiple responses to the same prompt.

Let $\pi_\theta(a \mid s)$ denote a stochastic policy parameterized by $\theta$, and $\pi_{\theta_b}$ denote the behavior policy used to generate training data. For a sampled state--action pair $(s,a)$, define the importance ratio
\[
r_\theta(s,a) \;\triangleq\; \frac{\pi_\theta(a \mid s)}{\pi_{\theta_b}(a \mid s)}.
\]
Given an advantage estimate $A$, GRPO adopts the PPO clipped surrogate
\[
\ell_\varepsilon(r,A)
\;\triangleq\;
\min\!\big(rA,\; \mathrm{clip}(r,1-\varepsilon,1+\varepsilon)\,A\big).
\]
For a prompt or context $c \sim \mathcal{D}$, GRPO samples a group of $G$ trajectories $\{\tau^{(g)}\}_{g=1}^G$, producing scalar returns $\{R^{(g)}\}_{g=1}^G$. The group-relative advantage is defined as $A_b^{(g)} \triangleq (R^{(g)} - \mu_G)/\sigma_G$, where $\mu_G$ and $\sigma_G$ are the mean and standard deviation of returns within the group. This normalization serves as an implicit control variate, reducing gradient variance without requiring a learned value function.

Let $d_{\pi_{\theta_b}}(s,a \mid c)$ denote the discounted state--action visitation distribution induced by the behavior policy under context $c$. The GRPO objective is given by
\[
\mathcal{L}_{\mathrm{GRPO}}(\theta)=
\mathbb{E}_{c \sim \mathcal{D}}
\left[
\frac{1}{G}
\sum_{g=1}^G
\mathbb{E}_{(s,a)\sim d_{\pi_{\theta_b}}(\cdot \mid c)}
\Big[
\min\!\Big(
r_\theta(s,a)\, A_b^{(g)},\;
\mathrm{clip}\big(r_\theta(s,a),\, 1\!-\!\varepsilon,\, 1\!+\!\varepsilon\big)\, A_b^{(g)}
\Big)
\Big]
\right].
\]
The clipping operation enforces a trust-region--like constraint on policy updates, while group-relative normalization provides context-aware credit assignment suitable for sparse or outcome-based rewards.

\subsection{Synchronous and Asynchronous GRPO Training}

We distinguish \emph{synchronous} from \emph{asynchronous} GRPO by whether rollout generation and policy optimization are coupled, consistent with standard distributed RL pipelines~\citep{mnih2016a3c,espeholt2018impala}.

In \emph{synchronous GRPO}, rollout generation and optimization proceed in lockstep. At update step $t$, trajectories are generated using the current policy $\pi_{\theta_t}$, and the aggregated policy gradient
\[
g_t \;\triangleq\; \nabla_\theta \mathcal{L}_{\mathrm{GRPO}}(\theta_t)
\]
is computed from on-policy data. The policy parameters are updated as
\[
\theta_{t+1} = \theta_t + \eta g_t,
\]
where $\eta > 0$ is the learning rate. In this regime, gradients are evaluated under the current policy distribution, and standard on-policy optimization behavior is observed.

In \emph{asynchronous GRPO}, rollout generation is decoupled from optimization to improve system throughput. At optimization step $t$, the data batch may be generated by a stale behavior policy $\pi_{\theta_{t-\tau}}$, where $\tau \ge 0$ denotes the staleness induced by pipeline delays and straggler effects. The learner update takes the form
\[
\theta_{t+1} = \theta_t + \eta\, \hat g_t,
\]
where $\hat g_t$ is a stochastic gradient estimator constructed from trajectories sampled under $d_{\pi_{\theta_{t-\tau}}}$.

Asynchronous execution decouples rollout generation from policy optimization to maximize throughput, but fundamentally transforms GRPO into an off-policy procedure. Although importance ratios partially correct for the behavior policy at the surrogate level, gradients are nonetheless evaluated under a lagged distribution, inducing systematic bias and temporal coupling between successive updates that does not arise in synchronized training. Prior work attributes such failures primarily to distributional mismatch manifesting as trust-region violations; we show that the resulting dynamical effects run deeper and can qualitatively alter optimization behavior. Understanding and controlling these dynamics is the central focus of this work.
\vspace{-4pt}
\section{Training Dynamics of Asynchronous RL}
\label{sec:motivation}
\vspace{-4pt}
In this section, we present one of the central findings of this paper: asynchronous and synchronous RL exhibit \emph{qualitatively different training dynamics}. We begin with an empirical study demonstrating that staleness introduced by asynchronous execution can induce catastrophic instability in policy optimization. We then show that this instability is consistently preceded by a clear dynamical signature—persistent alignment between consecutive policy gradients—captured by their cosine similarity. In contrast, synchronized RL typically exhibits low inter-gradient similarity, with near-orthogonal update directions. Finally, we provide a theoretical analysis that grounds these empirical observations, showing how gradient staleness induces alignment and fundamentally alters the convergence behavior of asynchronous RL.

\begin{figure*}[t]
  \centering
  \begin{subfigure}[t]{0.24\textwidth}
    \centering
    \includegraphics[width=\linewidth]{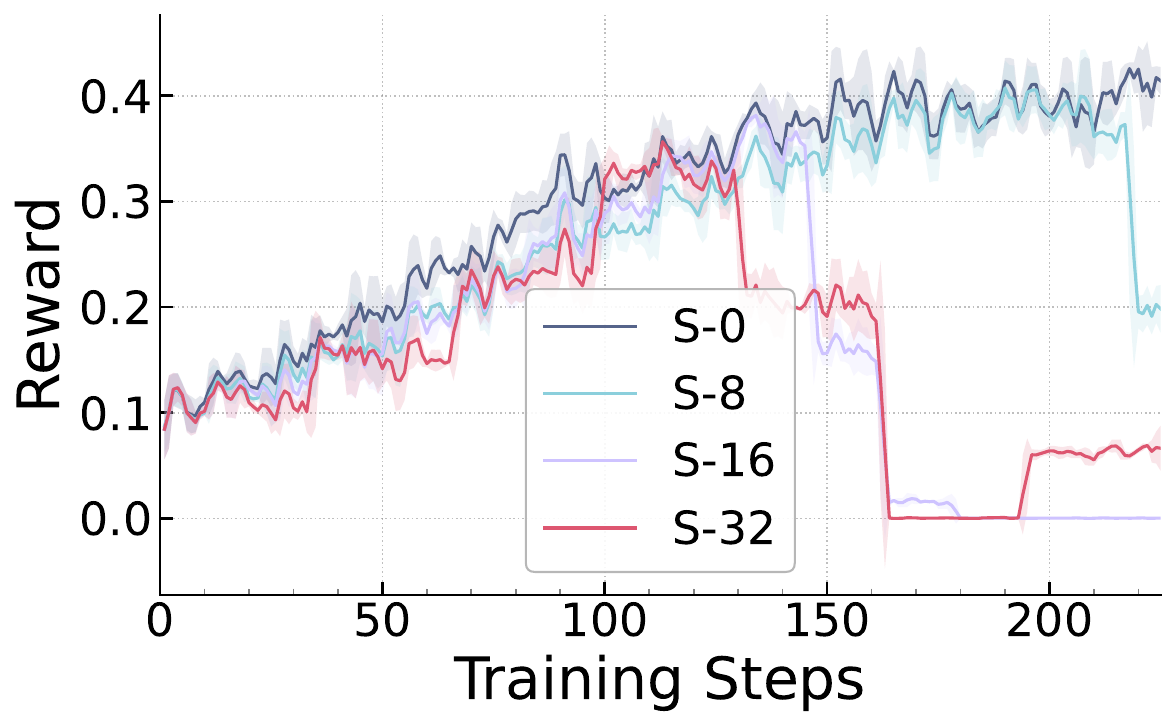}
    \caption{Reward trajectory.}
    \label{fig:motivation-staleness-reward}
  \end{subfigure}\hfill
  \begin{subfigure}[t]{0.24\textwidth}
    \centering
    \includegraphics[width=\linewidth]{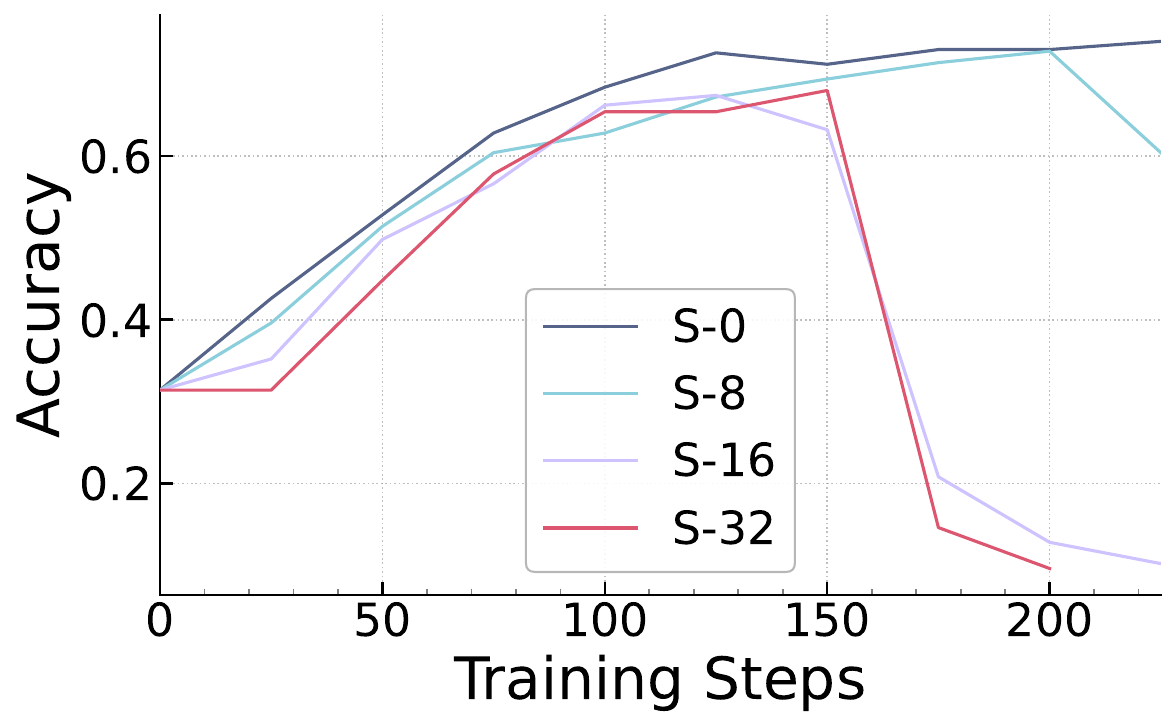}
    \caption{Accuracy trajectory.}
    \label{fig:motivation-staleness-acc}
  \end{subfigure}\hfill
  \begin{subfigure}[t]{0.24\textwidth}
    \centering
    \includegraphics[width=\linewidth]{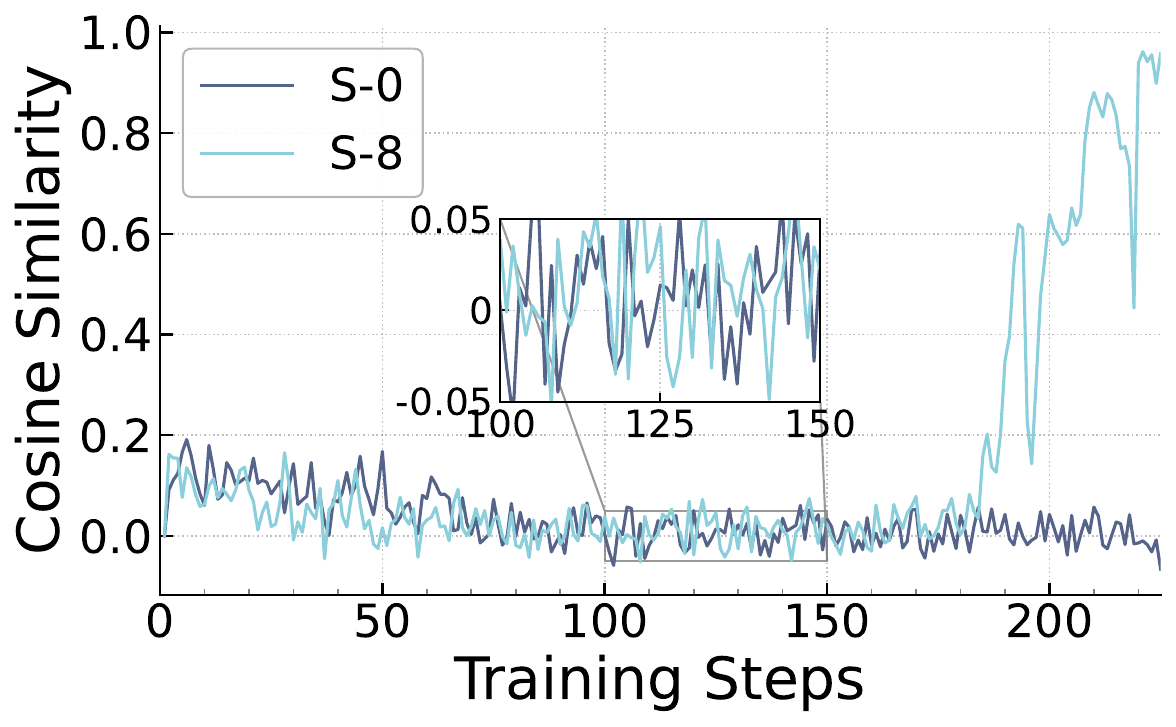}
    \caption{$c_t$ ($s{=}0$,\; $s{=}8$).}
    \label{fig:motivation-staleness-cossim-s0-s8}
  \end{subfigure}\hfill
  \begin{subfigure}[t]{0.24\textwidth}
    \centering
    \includegraphics[width=\linewidth]{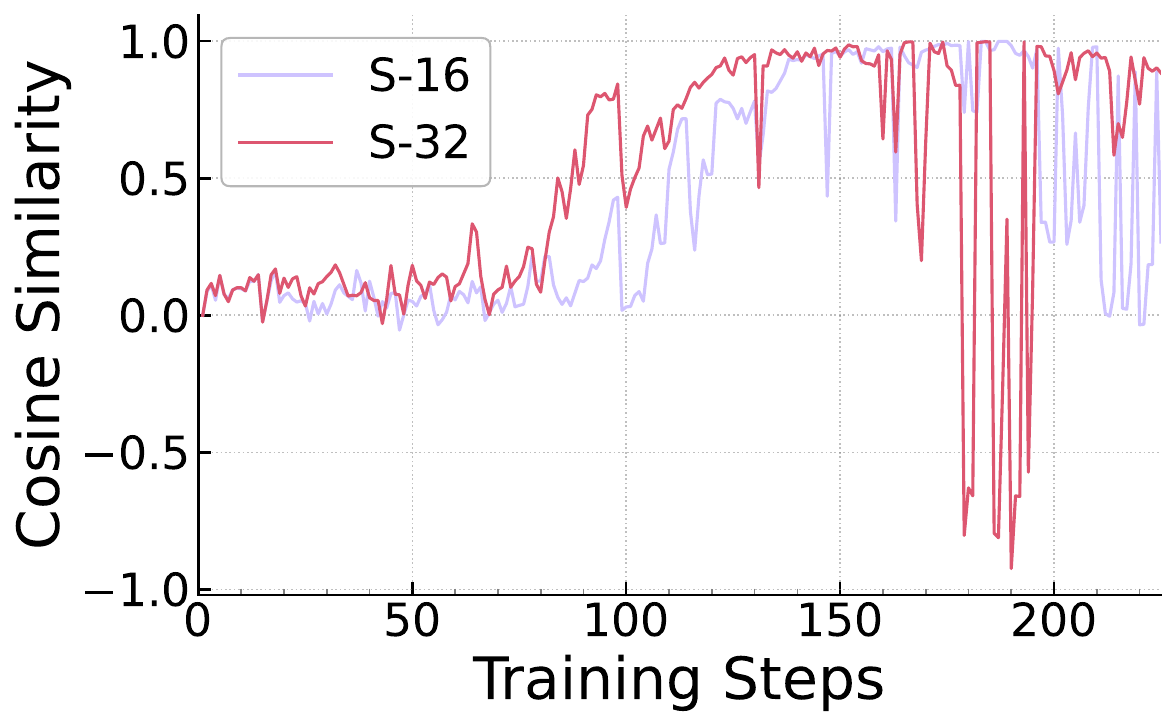}
    \caption{$c_t$ ($s{=}16$,\; $s{=}32$).}
    \label{fig:motivation-staleness-cossim-s16-s32}
  \end{subfigure}

  \caption{%
  \textbf{Progressive instability under increasing staleness.}
  Panels (a--b) report training reward and validation accuracy across staleness levels.
  Panels (c--d) report consecutive-gradient cosine similarity $c_t$, contrasting synchronized training ($s{=}0$) with mild ($s{=}8$) and larger staleness ($s{=}16,32$).
  Results on MATH with Qwen3-1.7B.
  }
  \label{fig:motivation-staleness}
\end{figure*}

% =========================================
% SUBSECTION: Problem Severity
% =========================================
\subsection{Staleness Induces Catastrophic Instability}
\label{subsec:problem-severity}
We study the effect of staleness by varying $s \in \{0, 8, 16, 32\}$, where $s$ denotes the number of optimization steps by which the behavior policy lags behind the learner policy, keeping all other training configurations fixed.
We define \emph{training collapse} as persistent instability characterized by abrupt large-magnitude drops in both reward and accuracy, no subsequent recovery over extended training, and substantial performance degradation relative to the on-policy reference.
Figure~\ref{fig:motivation-staleness} (a)--(b) illustrate this progressive degradation. Synchronized training ($s{=}0$) converges stably; mild staleness ($s{=}8$) remains stable until around step 220 before collapsing; at larger staleness, collapse occurs earlier, with $s{=}16$ collapsing around step 150 and $s{=}32$ around step 130, with no subsequent recovery in either case. This progressive degradation reveals a fundamental stability limitation of asynchronous GRPO, motivating the need for explicit stabilization.

\subsection{Gradient Alignment as a Dynamical Signature}
\label{subsec:dynamical-signature}

Having established that asynchronous GRPO can fail catastrophically, we now identify \emph{what distinguishes stable and unstable training dynamics}.
To this end, we introduce the \emph{consecutive-gradient cosine similarity}
{\small
\begin{equation}
\label{eq:cos-sim}
c_t \triangleq \frac{\langle \hat{g}_t, \hat{g}_{t-1}\rangle}{\|\hat{g}_t\|_2\,\|\hat{g}_{t-1}\|_2},
\end{equation}
}
where $\hat{g}_t$ denotes the aggregated policy gradient estimate used at update step $t$, computed from the update batch and potentially induced by staled rollout policy (see Appendix~\ref{app:ct-computation}).
This metric captures whether successive updates explore diverse descent directions ($|c_t|\approx 0$) or  exhibit strong directional alignment across iterations ($c_t\approx 1$).

\paragraph{Distinct cosine-similarity behavior under synchronous and asynchronous training.}
We empirically find that consecutive-gradient cosine similarity exhibits a clear qualitative difference between synchronous and asynchronous RL, as illustrated in Figure~\ref{fig:motivation-staleness} (c)--(d). 
In synchronous RL, cosine similarity remains tightly concentrated near zero throughout training, indicating that consecutive policy-gradient updates are nearly orthogonal and explore diverse descent directions—behavior that is stable across training stages and model scales (additional on-policy results across model sizes are provided in Appendix~\ref{app:onpolicy-ct}).

In contrast, asynchronous RL exhibits persistently large-magnitude cosine similarity: under staleness, $|c_t|$ departs from zero and becomes markedly more volatile. At mild staleness ($s{=}8$), $|c_t|$ spikes around step 180, well before reward degradation at step 220; at $s{=}16$, alignment rises as early as step 80, preceding collapse at step 150 by a substantial margin. At larger staleness, $|c_t|$ oscillates with very large amplitude, indicating repeated reinforcement of similar update directions. This sustained alignment provides a mechanistic explanation for collapse: correlated updates amplify effective step sizes along a low-dimensional subspace, causing errors to accumulate and overshoot the trust region. Crucially, the consistent precedence of $|c_t|$ elevation over reward collapse identifies gradient alignment as a leading indicator---and likely a causal driver---of instability, establishing it as a robust stability signature that motivates explicit alignment control in asynchronous RL.

\subsection{Theoretical Evidence}
\label{subsec:theory-motivation}
We now show that the empirical gradient-alignment phenomenon admits a direct theoretical explanation.
The key observation is that asynchronous execution fundamentally changes the structure of the stochastic gradient update.
In particular, stale rollouts can introduce a \emph{persistent bias} that couples consecutive gradients, causing the optimization trajectory to deviate from standard descent behavior.

\begin{theorem}[Convergence of Asynchronous GRPO, Informal]
\label{thm:align_async_0}
Under standard smoothness and bounded-variance assumptions, and assuming locally bounded parameter drift induced by staleness, the iterates satisfy
{\small
\begin{align*}
&\min_{0\le t\le T-1}\mathbb{E}\|\nabla \mathcal{L}(\theta_t)\|^2
\;\le\;
\frac{2(L^\star-\mathcal{L}(\theta_0))}{\eta T}
+2L\eta\sigma^2 +\mathrm{Err}(T)  \\
&\quad +\frac{4L\eta}{T}\sum_{t=0}^{T-1}\mathbb{E}\|b_t\|^2
-\frac{2}{T}\sum_{t=0}^{T-1}\eta\rho_t\lambda_t\,
\mathbb{E}\|\widehat g_{t-1}\|^2,
\end{align*}
}
\noindent where $L^\star \triangleq \sup_\theta \mathcal{L}(\theta)$,
$L$ is the smoothness constant of $\mathcal{L}(\theta)$,
$\sigma^2$ bounds the variance of the gradient,
$b_t$ denotes the staleness-induced bias arising from stale rollouts,
and $\widehat g_{t-1}$ is the stochastic gradient from the previous update.
The coefficients $\rho_t$ and $\lambda_t$ quantify the persistence and strength of temporal coupling induced by staleness, respectively.
Under the small-drift linearization, the bias admits the approximation
$
b_t \;\approx\; \eta_{t-1} B_t \widehat g_{t-1},
$
where $B_t$ is a local linear operator mapping parameter staleness into gradient bias.
The term $\mathrm{Err}(T)$ collects higher-order remainder terms due to rollout noise and approximation error.
\end{theorem}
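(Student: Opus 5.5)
The argument is a biased-SGD ascent analysis of the update $\theta_{t+1}=\theta_t+\eta\widehat g_t$ (the paper maximizes $\mathcal{L}$, which is why $L^\star=\sup_\theta\mathcal{L}$ appears). We decompose the stochastic gradient as
\[
\widehat g_t=\nabla\mathcal{L}(\theta_t)+b_t+\xi_t ,
\]
where $\xi_t$ is zero-mean rollout noise conditioned on the past, with $\mathbb{E}\|\xi_t\|^2\le\sigma^2$. The term $b_t=\mathbb{E}[\widehat g_t\mid\mathcal F_t]-\nabla\mathcal{L}(\theta_t)$ is the staleness bias, coming from evaluating under $d_{\pi_{\theta_{t-\tau}}}$ rather than $d_{\pi_{\theta_t}}$.

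\textbf{Step 1 (ascent lemma).} By $L$-smoothness,
\[
\mathcal{L}(\theta_{t+1})\ge\mathcal{L}(\theta_t)+\eta\langle\nabla\mathcal{L}(\theta_t),\widehat g_t\rangle-\tfrac{L\eta^2}{2}\|\widehat g_t\|^2 .
\]
Take conditional expectations; the inner product with $\xi_t$ vanishes.

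\textbf{Step 2 (bound the quadratic term).} Use $\|a+b+c\|^2\le 2\|a+b\|^2+2\|c\|^2$ and then $\|a+b\|^2\le 2\|a\|^2+2\|b\|^2$. This gives
\[
\mathbb{E}\|\widehat g_t\|^2\le 4\|\nabla\mathcal{L}\|^2+4\|b_t\|^2+2\sigma^2 .
\]
For $\eta\le 1/(8L)$, the gradient-norm contribution is absorbed into the $\eta\|\nabla\mathcal{L}\|^2$ term. What remains, after dividing by $\eta/2$, produces the $2L\eta\sigma^2$ and $4L\eta\,\mathbb{E}\|b_t\|^2$ terms. The exact constants will be tuned to match the statement.

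\textbf{Step 3 (the cross term with the bias).} This is where the signed coupling term arises, and I expect it to be the main obstacle. Insert the linearization $b_t\approx\eta_{t-1}B_t\widehat g_{t-1}$. This is justified by a first-order Taylor expansion of $\theta_b\mapsto\mathbb{E}_{d_{\pi_{\theta_b}}}[\nabla\ell]$ about $\theta_t$, using the locally bounded drift assumption $\|\theta_t-\theta_{t-\tau}\|\le\sum_{k=1}^{\tau}\eta\|\widehat g_{t-k}\|$. The dominant drift direction is the previous step $\eta\widehat g_{t-1}$; the other lags and the second-order Taylor error are dumped into $\mathrm{Err}(T)$.

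We then need to control $\langle\nabla\mathcal{L}(\theta_t),B_t\widehat g_{t-1}\rangle$. We define the coefficients so that the term has exactly the stated form:
\[
\lambda_t\,\|\widehat g_{t-1}\|^2\triangleq-\langle B_t\widehat g_{t-1},\widehat g_{t-1}\rangle/\|\widehat g_{t-1}\|^2\cdot\|\widehat g_{t-1}\|^2
\]
is the Rayleigh-quotient strength of the operator along the previous direction. The persistence $\rho_t$ measures the alignment of $\nabla\mathcal{L}(\theta_t)$ with $\widehat g_{t-1}$; this is where consecutive-gradient cosine similarity $c_t$ enters. Write $\nabla\mathcal{L}(\theta_t)=\rho_t\widehat g_{t-1}+r_t$ with $r_t\perp\widehat g_{t-1}$. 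Then the aligned part yields $-\eta\rho_t\lambda_t\|\widehat g_{t-1}\|^2$. The orthogonal remainder $\langle r_t,B_t\widehat g_{t-1}\rangle$ is bounded by Young's inequality against a fraction of $\|\nabla\mathcal{L}\|^2$ plus a higher-order term in $\eta^2$, which joins $\mathrm{Err}(T)$.

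Making this split rigorous, without letting the remainder swamp the signed term, is the delicate part. The fallback is to state it as holding up to $O(\eta^2)$ terms collected in $\mathrm{Err}(T)$, which the informal statement permits.

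\textbf{Step 4 (telescope).} Sum from $t=0$ to $T-1$. Telescope $\mathcal{L}(\theta_T)-\mathcal{L}(\theta_0)\le L^\star-\mathcal{L}(\theta_0)$, divide by $\eta T/2$, and lower-bound the average of $\mathbb{E}\|\nabla\mathcal{L}(\theta_t)\|^2$ by its minimum. This yields the stated inequality.
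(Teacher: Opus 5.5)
Your Steps 1, 2 and 4 match the paper's first theorem, which gives the bound with $-\frac{2}{T}\sum_t\mathbb{E}\langle\nabla\mathcal{L}(\theta_t),b_t\rangle$ as the last term. Your cruder estimate $4\|\nabla\mathcal{L}\|^2+4\|b_t\|^2+2\sigma^2$ even reproduces the constants $4L\eta$ and $2L\eta\sigma^2$ exactly.

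The gap is in Step 3. Smoothness, bounded variance and the linearization say nothing about the sign or size of $\langle\nabla\mathcal{L}(\theta_t),B_t\widehat g_{t-1}\rangle$, so the signed term cannot be derived. Defining $\rho_t,\lambda_t$ so that the term ``has the stated form'' fails as written, for two reasons.

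\textbf{Sign.} You define $\lambda_t$ as \emph{minus} the Rayleigh quotient. Then the aligned part of $\langle\nabla\mathcal{L}(\theta_t),b_t\rangle$ is $-\eta\rho_t\lambda_t\|\widehat g_{t-1}\|^2$. The prefactor $-\frac{2}{T}\sum_t$ then puts it into the bound with a plus sign, opposite to the statement. For ascent the drift is $+\eta\widehat g_{t-1}$, so $\lambda_t$ must be the Rayleigh quotient itself.

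\textbf{Randomness.} Your $\rho_t=\langle\nabla\mathcal{L}(\theta_t),\widehat g_{t-1}\rangle/\|\widehat g_{t-1}\|^2$ and $\lambda_t$ are pathwise random variables. What you obtain is therefore $\eta\,\mathbb{E}[\rho_t\lambda_t\|\widehat g_{t-1}\|^2]$, not deterministic coefficients multiplying $\mathbb{E}\|\widehat g_{t-1}\|^2$. Nothing makes $\rho_t$ or $\lambda_t$ nonnegative either. The term then carries no persistence content; it is only a relabelling of the aligned component.

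The missing ingredient is that the paper \emph{assumes} persistence at the level of expectations.
\begin{itemize}
\item A.7 requires $\mathbb{E}[\widehat g_{t-1}^\top B_t\widehat g_{t-1}]\ge\lambda_t\mathbb{E}\|\widehat g_{t-1}\|^2$ with $\lambda_t\ge0$.
\item A.8 requires $\mathbb{E}\langle\nabla\mathcal{L}(\theta_t),B_t\widehat g_{t-1}\rangle\ge\rho_t\mathbb{E}[\widehat g_{t-1}^\top B_t\widehat g_{t-1}]-\delta_t$ with $\rho_t\in[0,1]$.
\end{itemize}
Chaining the two, which uses $\rho_t\ge0$, gives $\mathbb{E}\langle\nabla\mathcal{L}(\theta_t),B_t\widehat g_{t-1}\rangle\ge\rho_t\lambda_t\mathbb{E}\|\widehat g_{t-1}\|^2-\delta_t$ directly. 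The orthogonal part of $\nabla\mathcal{L}(\theta_t)$ that you tame with Young's inequality is simply absorbed into the slack $\delta_t$, so no decomposition of $\nabla\mathcal{L}(\theta_t)$ is needed.

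The paper then substitutes the drift $\theta_t-\theta_{t-\tau_t}=\eta\sum_{k=1}^{\tau_t}\widehat g_{t-k}$ into $b_t=B_t(\theta_t-\theta_{t-\tau_t})+r_t$. Here $r_t$ is the linearization remainder, not your orthogonal component. This isolates the $k=1$ term, and $\mathrm{Err}(T)$ collects, in absolute value, the lags $k\ge2$, the remainder term $\mathbb{E}\|\nabla\mathcal{L}(\theta_t)\|\|r_t\|$, and $\eta\delta_t$.

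Your Young step is legitimate: $\eta\le1/(8L)$ leaves enough slack in the $\|\nabla\mathcal{L}\|^2$ coefficient to absorb it without changing the leading constants. But it only yields the random-coefficient version. To reach the stated inequality you need an expectation-level persistence hypothesis of the A.7--A.8 type, with $\lambda_t$ taken as a nonnegative Rayleigh-type constant.
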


The formal version of Theorem~\ref{thm:align_async_0} and its proof can be found in Appendix~\ref{app:async-convergence}.
Theorem~\ref{thm:align_async_0} extends the nonconvex convergence guarantee with two staleness-dependent effects.
The term $\sum_t \mathbb{E}\|b_t\|^2$ captures the \emph{magnitude} of the staleness-induced bias and is always detrimental.
The final term reflects \emph{temporal coupling}: when the bias aligns with previous gradients, consecutive updates become correlated.

\paragraph{Alignment-induced instability and collapse.}
Under asynchrony, empirical results show that this temporal coupling strengthens over time, leading to persistent alignment between consecutive gradients.
Through the approximation $b_t \approx \eta_{t-1} B_t \widehat g_{t-1}$, such alignment directly inflates the bias magnitude $\|b_t\|$ as past gradients are repeatedly reinforced.
As a result, the bias penalty term $\sum_t \mathbb{E}\|b_t\|^2$ grows and eventually dominates the right-hand side of the bound, overwhelming the descent signal.
When this occurs, updates no longer behave as local descent steps, leading to unstable dynamics and training collapse.
This transition manifests empirically as a sharp rise in consecutive-gradient cosine similarity immediately preceding performance degradation. The analysis identifies \emph{alignment persistence}—rather than distributional mismatch alone—as the primary driver of instability in asynchronous GRPO. This perspective motivates explicit control of inter-step gradient alignment to suppress bias amplification and prevent collapse, which directly informs the design of our method below.

\section{Gradient Alignment Control (GAC)}
\label{sec:method}
\vspace{-4pt}
We propose \textsc{GAC}, a dynamics-aware stabilization mechanism for asynchronous GRPO that explicitly regulates \emph{inter-step gradient alignment} induced by stale rollouts.
Unlike prior approaches that operate at the surrogate or distributional level, \textsc{GAC} intervenes directly at the level of the policy-gradient update.
The method monitors the alignment between consecutive gradients and selectively suppresses progress along stale-aligned directions while preserving informative orthogonal components.
This intervention breaks the positive feedback loop created by temporal coupling under staleness and restores update geometry closer to that observed in synchronized training. The complete procedure of GAC is presented at Algorithm~\ref{alg:gac}.

\subsection{Directional Gradient Projection}
\label{sec:method:projection}

The core operation of \textsc{GAC} is a directional projection that rescales gradient components relative to a reference direction.
Let $v$ be a nonzero reference vector and $u \triangleq v/\|v\|_2$ its unit direction.
Given any gradient $g$, we decompose it into components parallel and orthogonal to $u$:
\begin{equation}
g_{\parallel} \;\triangleq\; \langle g,u\rangle u,
\qquad
g_{\perp} \;\triangleq\; g - g_{\parallel}.
\label{eq:decomp}
\end{equation}
\textsc{GAC} applies anisotropic rescaling with directional gains $(\alpha,\beta)\in\mathbb{R}_+^2$:
\begin{equation}
g' \;\triangleq\; \alpha\, g_{\parallel} + \beta\, g_{\perp}.
\label{eq:projection}
\end{equation}
Choosing $\alpha < \beta$ attenuates progress along the reference direction $u$ while preserving the residual update signal in the orthogonal subspace.
This operation can be equivalently written as a rank-one linear transform
$
g' = \bigl(\beta I + (\alpha-\beta)uu^\top\bigr) g,
$
highlighting that \textsc{GAC} modifies update geometry rather than gradient magnitude uniformly.

\subsection{Adaptive Alignment Control under Stale Rollouts}
\label{sec:method:algorithm}

In asynchronous GRPO, gradients $\widehat g_t$ are computed from rollouts generated by a stale behavior policy $\pi_{\theta_{t-\tau}}$, which induces temporal coupling across updates.
To regulate this effect, \textsc{GAC} tracks the cosine similarity $c_t$ (Eq.~\ref{eq:cos-sim}) between consecutive gradients, which serves as a direct measure of alignment.
As shown in Section~\ref{sec:motivation}, synchronized training maintains $|c_t|$ near zero, whereas asynchronous training exhibits persistently elevated alignment prior to collapse. \textsc{GAC} partitions the alignment space using two thresholds $0 < c_{\text{low}} < c_{\text{high}}$, corresponding to three operational regimes:

\vspace{0.4em}
\noindent\textbf{Safe regime} ($|c_t| \le c_{\text{low}}$).  
Alignment remains within the on-policy range.
No intervention is applied and the update proceeds normally:
$
\theta_{t+1} = \theta_t + \eta\,\widehat g_t.
$

\vspace{0.25em}
\noindent\textbf{Projection regime} ($c_{\text{low}} < |c_t| < c_{\text{high}}$).  
Alignment is elevated but correctable.
\textsc{GAC} applies directional projection to reduce alignment to the target level $c_{\text{low}}$.

\vspace{0.25em}
\noindent\textbf{Violation regime} ($|c_t| \ge c_{\text{high}}$).  
Alignment indicates a severe trust-region violation.
The update is skipped to prevent destabilizing overshoot:
$
\theta_{t+1} = \theta_t.
$

\vspace{0.4em}
\noindent
In the projection regime, \textsc{GAC} uses the previous gradient as the reference direction,
$u_{t-1} \triangleq \widehat g_{t-1}/\|\widehat g_{t-1}\|_2$,
and rescales the parallel component to achieve the target alignment:
\begin{equation}
\widehat g_t'
\;=\;
\frac{c_{\text{low}}}{|c_t|}\,
\langle \widehat g_t, u_{t-1}\rangle u_{t-1}
\;+\;
\Bigl(
\widehat g_t
-
\langle \widehat g_t, u_{t-1}\rangle u_{t-1}
\Bigr).
\label{eq:gac_adaptive}
\end{equation}
This rescaling preserves orthogonal gradient information while reducing the magnitude of the alignment component.

\subsection{Provable Benefit}
\label{sec:method:theory}

We now connect the projection mechanism to the theoretical analysis of staleness-induced bias, $b_t$.
The following proposition formalizes how \textsc{GAC} attenuates this effect.

\begin{proposition}[Bias reduction via orthogonal projection, Informal]
\label{prop:bias_reduction_0}
Assume the \emph{small-drift linearization} and \emph{temporal persistence} conditions hold, such that
$
b_t \approx \eta_{t-1} B_t \widehat g_{t-1}.
$
Let $b_t^\perp$ denote the effective bias after applying \textsc{GAC} with $\alpha \le \beta$.
Then
\begin{equation}
\mathbb{E}\|b_t^\perp\|^2
\;\le\;
\mathbb{E}\|b_t\|^2
-
\eta^2 \lambda_t^2\,\mathbb{E}\|\widehat g_{t-1}\|^2
+
\mathcal{O}(\epsilon_t),
\end{equation}
where $\lambda_t>0$ quantifies the alignment strength between $b_t$ and $\widehat g_{t-1}$, and $\epsilon_t$ collects higher-order approximation terms.
\end{proposition}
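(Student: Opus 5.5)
The plan is to formalize the effective bias after GAC as the bias propagated through the projected update. Under the small-drift linearization, $b_t \approx \eta_{t-1} B_t \widehat g_{t-1}$. Once GAC modifies the previous update, the stale drift is generated by $\widehat g_{t-1}' = P_{t-1}\widehat g_{t-1}$ instead of $\widehat g_{t-1}$, where $P_{t-1}=\beta I+(\alpha-\beta)u_{t-2}u_{t-2}^\top$. More usefully, the projection relative to the current reference direction removes the component of the bias parallel to $u_{t-1}$. I would therefore define $b_t^\perp \triangleq (I-(1-\alpha/\beta)u_{t-1}u_{t-1}^\top)b_t$, normalized with $\beta=1$ as in Eq.~\eqref{eq:gac_adaptive}. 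The discrepancy between this idealized definition and the literal linearized bias, namely the second-order terms in $\eta$ and the variation of $B_t$ across the staleness window, goes into $\epsilon_t$.

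\textbf{Orthogonal decomposition.} Write $b_t = \langle b_t,u_{t-1}\rangle u_{t-1} + b_t^{\mathrm{res}}$. By Pythagoras,
\[
\|b_t^\perp\|^2 = \|b_t\|^2-(1-\alpha^2)\langle b_t,u_{t-1}\rangle^2 .
\]
Since $\alpha\le\beta=1$, the coefficient $1-\alpha^2$ is nonnegative. In the extreme case $\alpha=0$ the entire parallel energy is removed, and intermediate $\alpha$ interpolates between the two. This step is purely algebraic and holds pathwise.

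\textbf{Lower bound on the parallel energy.} I will use the temporal persistence condition from the formal version of Theorem~\ref{thm:align_async_0}, in the form $\langle b_t,\widehat g_{t-1}\rangle \ge \eta\lambda_t\|\widehat g_{t-1}\|^2 - \epsilon_t'$. This expresses that the linearized operator $B_t$ has a positive Rayleigh-type component $\lambda_t$ along the previous gradient, i.e.\ $u_{t-1}^\top B_t u_{t-1}\ge \lambda_t$ up to error. Dividing by $\|\widehat g_{t-1}\|$ gives $\langle b_t,u_{t-1}\rangle \ge \eta\lambda_t\|\widehat g_{t-1}\| - \epsilon_t'/\|\widehat g_{t-1}\|$. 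Squaring then yields $\langle b_t,u_{t-1}\rangle^2\ge \eta^2\lambda_t^2\|\widehat g_{t-1}\|^2 - \mathcal O(\epsilon_t)$. I expect the cross term to be controlled using bounded gradients, or absorbed when the persistence error is small relative to the signal.

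\textbf{Combining.} Substituting this bound into the decomposition and taking expectations gives the claim. The coefficient is $(1-\alpha^2)$, which equals $1$ for the full projection $\alpha=0$. For general $\alpha\le\beta$ I would either state the bound with $(1-\alpha^2/\beta^2)$ folded into $\lambda_t$, or regard the informal statement as the $\alpha\to 0$ case.

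\textbf{Main obstacle.} The hardest part is the squaring step. Its lower bound needs the sign of $\langle b_t,u_{t-1}\rangle$ to be controlled, and a small $\|\widehat g_{t-1}\|$ makes the error division unstable. I would first handle this by working with expectations conditioned on $\widehat g_{t-1}$ and applying the persistence inequality in the squared form $\mathbb E\langle b_t,u_{t-1}\rangle^2 \ge \eta^2\lambda_t^2\mathbb E\|\widehat g_{t-1}\|^2-\epsilon_t$. Jensen's inequality, or simply adopting this squared form as the assumption, would sidestep the division entirely.
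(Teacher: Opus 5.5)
Your skeleton matches the paper's own proof. Proposition~\ref{prop:bias_reduction_formal} sets $b_t^\perp=P^\perp_{t-1}b_t$, uses the Pythagorean identity $\|b_t^\perp\|^2=\|b_t\|^2-\langle b_t,u_{t-1}\rangle^2$, and lower-bounds $\langle b_t,\widehat g_{t-1}\rangle$ from the linearization, almost-sure persistence and Cauchy--Schwarz. It then divides and squares. Your $(1-\alpha^2)$ version is a correct generalization of the paper's $\alpha=0$ case. You are also right that the informal claim cannot hold with coefficient $1$ for every $\alpha\le\beta$: with $\beta=1$, the choice $\alpha=\beta$ removes nothing. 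One caveat: GAC's $\alpha_t=c_{\text{low}}/|c_t|$ is data-dependent, so folding $1-\alpha_t^2$ into a constant requires a deterministic bound $\alpha_t\le\bar\alpha<1$.

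Your proposal is incomplete at the squaring step. You leave it open and suggest patching it with bounded gradients or by assuming the squared inequality outright. The second option is circular, and neither is needed.

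First, derive the persistence bound rather than positing it with an additive error. From $b_t=\eta_{t-1}B_t\widehat g_{t-1}+r_t$ and $\widehat g_{t-1}^\top B_t\widehat g_{t-1}\ge\lambda_t\|\widehat g_{t-1}\|^2$, you get $\langle b_t,\widehat g_{t-1}\rangle\ge\eta_{t-1}\lambda_t\|\widehat g_{t-1}\|^2-\|r_t\|\,\|\widehat g_{t-1}\|$. Dividing by $\|\widehat g_{t-1}\|$ gives $Y\ge X-R$, where $Y=\langle b_t,u_{t-1}\rangle$, $X=\eta_{t-1}\lambda_t\|\widehat g_{t-1}\|$ and $R=\|r_t\|$, so nothing blows up. Even in your additive form, the cross term after squaring is $2\eta\lambda_t\epsilon_t'$, because the $1/\|\widehat g_{t-1}\|$ cancels.

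Second, no sign control is needed. For $X,R\ge 0$,
\[
Y\ge X-R \quad\Longrightarrow\quad Y^2\ge X^2-2XR .
\]
If $X\ge R$, then $Y\ge X-R\ge 0$ and $Y^2\ge (X-R)^2\ge X^2-2XR$. If $X<R$, the right-hand side is negative. Taking expectations gives the paper's explicit error $2\eta_{t-1}\lambda_t\,\mathbb{E}[\|\widehat g_{t-1}\|\,\|r_t\|]\le 2\eta_{t-1}\lambda_t\epsilon_t\sqrt{\mathbb{E}\|\widehat g_{t-1}\|^2}$.

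Your worry is actually legitimate against the paper's Step~3, which squares both sides without checking the sign. This case split is what justifies its conclusion once $\|r_t\|^2$ is dropped.

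If you want to assume only expectation-level persistence (as in Assumption A.7), your ``work in expectation'' idea goes through with Cauchy--Schwarz rather than Jensen. Persistence controls $\mathbb{E}[Y\|\widehat g_{t-1}\|]$, not $\mathbb{E}Y$. Use $\mathbb{E}[Y\|\widehat g_{t-1}\|]\le\sqrt{\mathbb{E}Y^2}\,\sqrt{\mathbb{E}\|\widehat g_{t-1}\|^2}$, together with $\mathbb{E}\langle b_t,\widehat g_{t-1}\rangle\ge\eta_{t-1}\lambda_t\mathbb{E}\|\widehat g_{t-1}\|^2-\mathbb{E}[\|r_t\|\,\|\widehat g_{t-1}\|]$ and the same case split applied to $\sqrt{\mathbb{E}Y^2}$. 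This yields the identical inequality under the weaker hypothesis.
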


\paragraph{Interpretation.} The formal version of Proposition~\ref{prop:bias_reduction_0} and its proof can be found in Appendix~\ref{app:prop}. 
Proposition~\ref{prop:bias_reduction_0} shows that suppressing the previous-step direction yields a \emph{strict reduction} in the magnitude of the staleness-induced bias.
The removed term scales with the same persistence factor $\lambda_t$ that drives gradient alignment under stale rollouts.
Consequently, \textsc{GAC} directly weakens the mechanism responsible for pre-collapse amplification, providing a principled explanation for the stabilization effects observed empirically.
\vspace{-4pt}
\section{Experiments}
\label{sec:experiments}
\vspace{-4pt}
In this section, we conduct a comprehensive evaluation of \textsc{GAC} for stabilizing off-policy GRPO training induced by stale rollouts.
Across experiments, \textsc{GAC} remains stable at large staleness, consistently outperforms off-policy baselines, and recovers on-policy-level performance.
Implementation details, additional results and hyperparameter choices are provided in Appendix~\ref{app:impl}.

\begin{figure*}[t]
  \centering
  % =========================
  % Row 1: Reward (4 models)
  % =========================
  \begin{subfigure}[t]{0.24\textwidth}
    \centering
    \includegraphics[width=\linewidth]{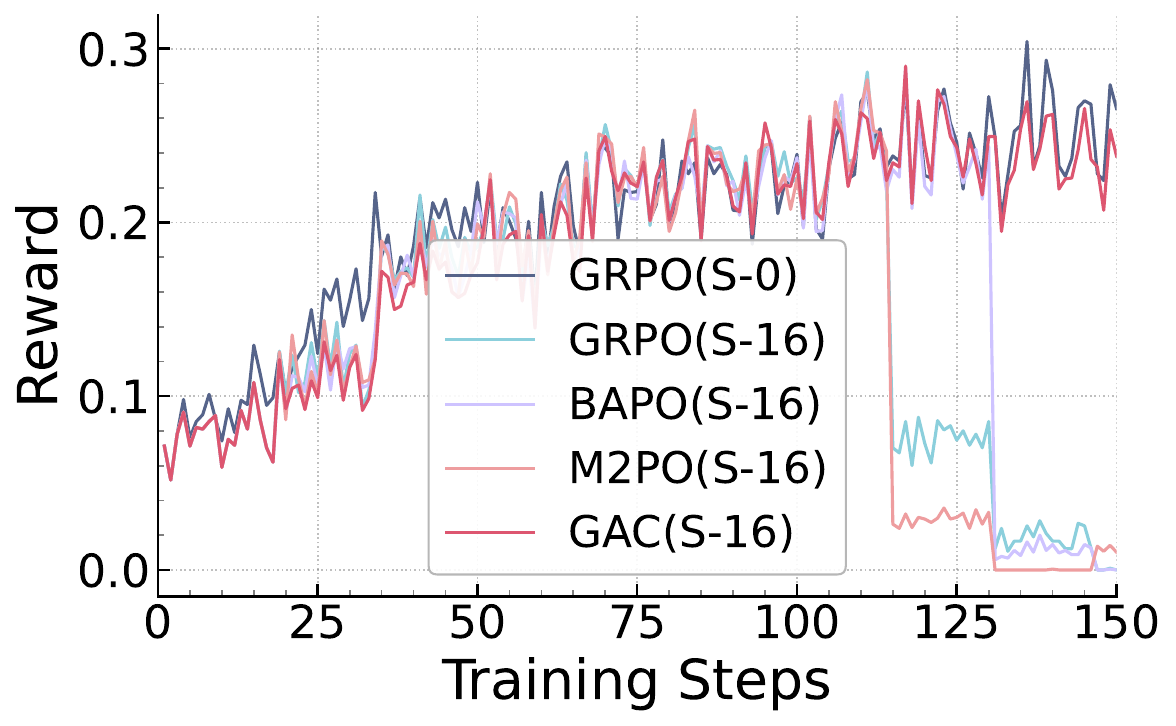}
    \caption{\small Llama-3.2-3B-Inst.}
    \label{fig:dynamics-llama-3p2-3b-reward}
  \end{subfigure}
  \begin{subfigure}[t]{0.24\textwidth}
    \centering
    \includegraphics[width=\linewidth]{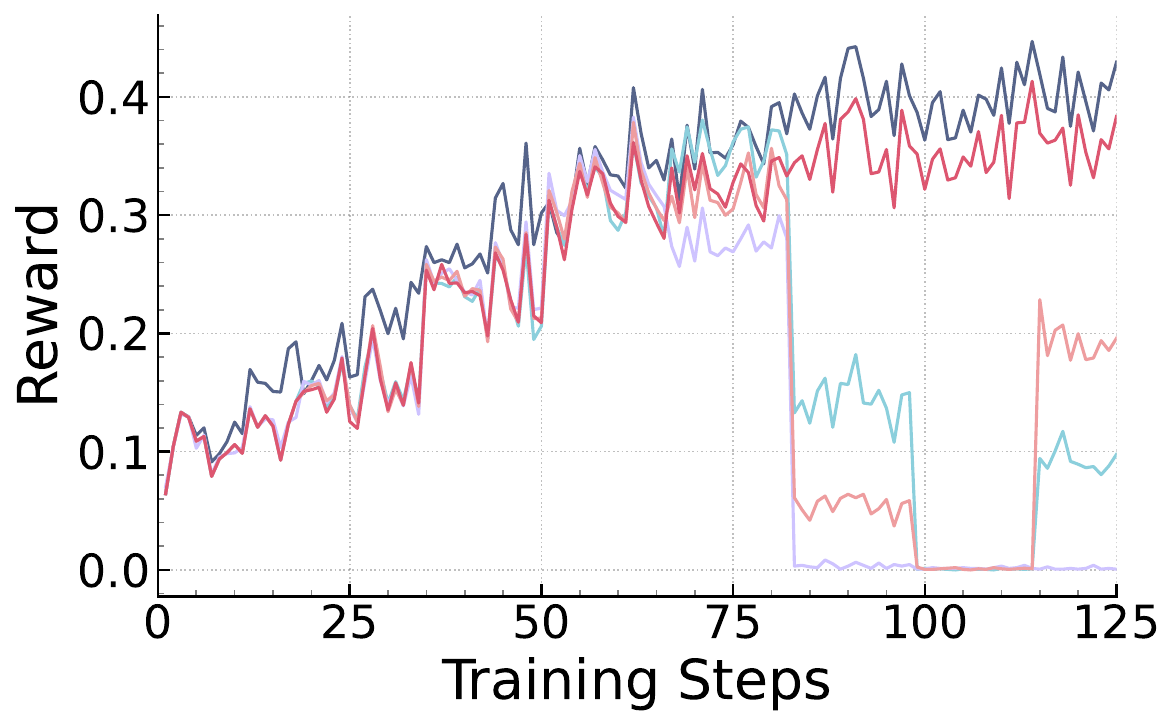}
    \caption{\small Qwen3-1.7B}
    \label{fig:dynamics-qwen-1p7b-reward}
  \end{subfigure}\hfill
  \begin{subfigure}[t]{0.24\textwidth}
    \centering
    \includegraphics[width=\linewidth]{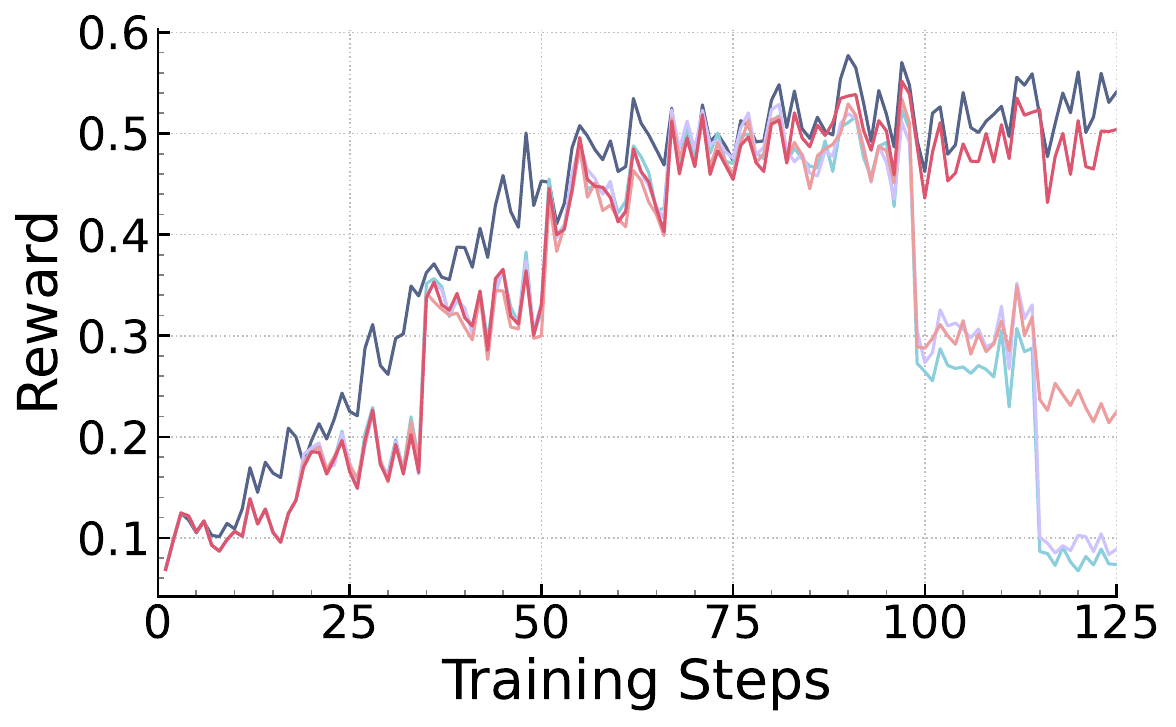}
    \caption{\small Qwen3-4B}
    \label{fig:dynamics-qwen-4b-reward}
  \end{subfigure}\hfill
  \begin{subfigure}[t]{0.24\textwidth}
    \centering
    \includegraphics[width=\linewidth]{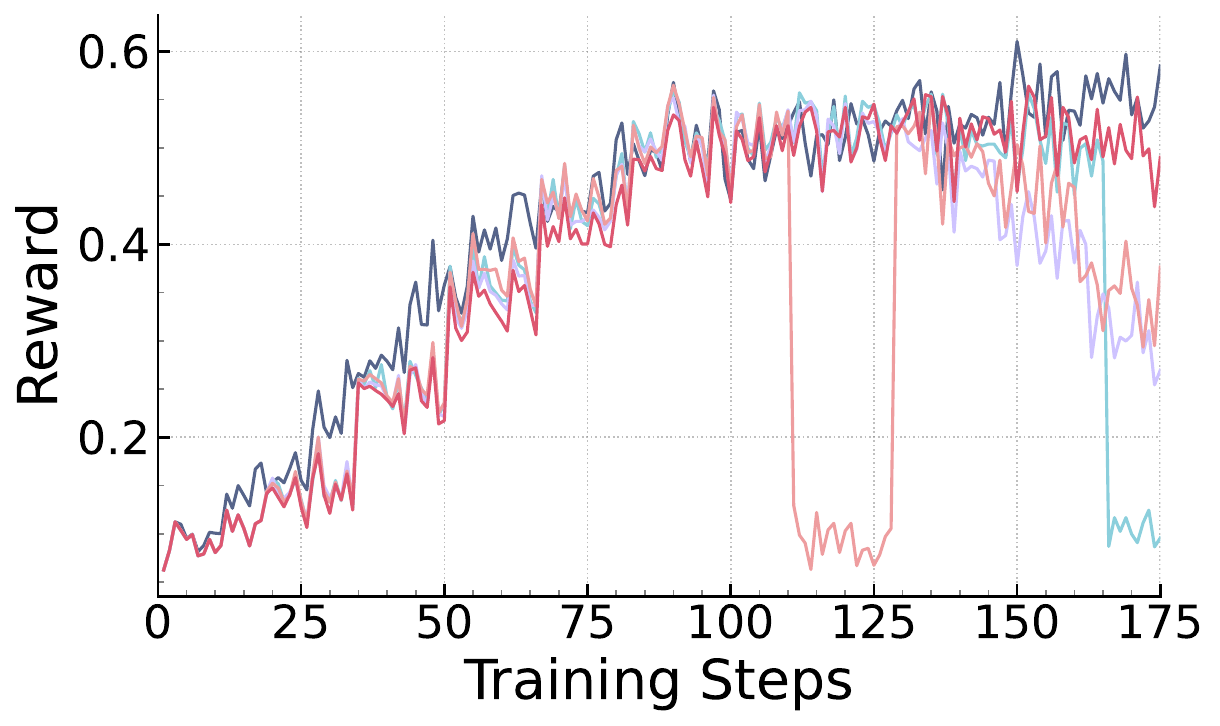}
    \caption{\small Qwen3-8B}
    \label{fig:dynamics-qwen-8b-reward}
  \end{subfigure}\hfill
  
  \vspace{0.35em}

  % =========================
  % Row 2: Accuracy (4 models)
  % =========================
  \begin{subfigure}[t]{0.24\textwidth}
    \centering
    \includegraphics[width=\linewidth]{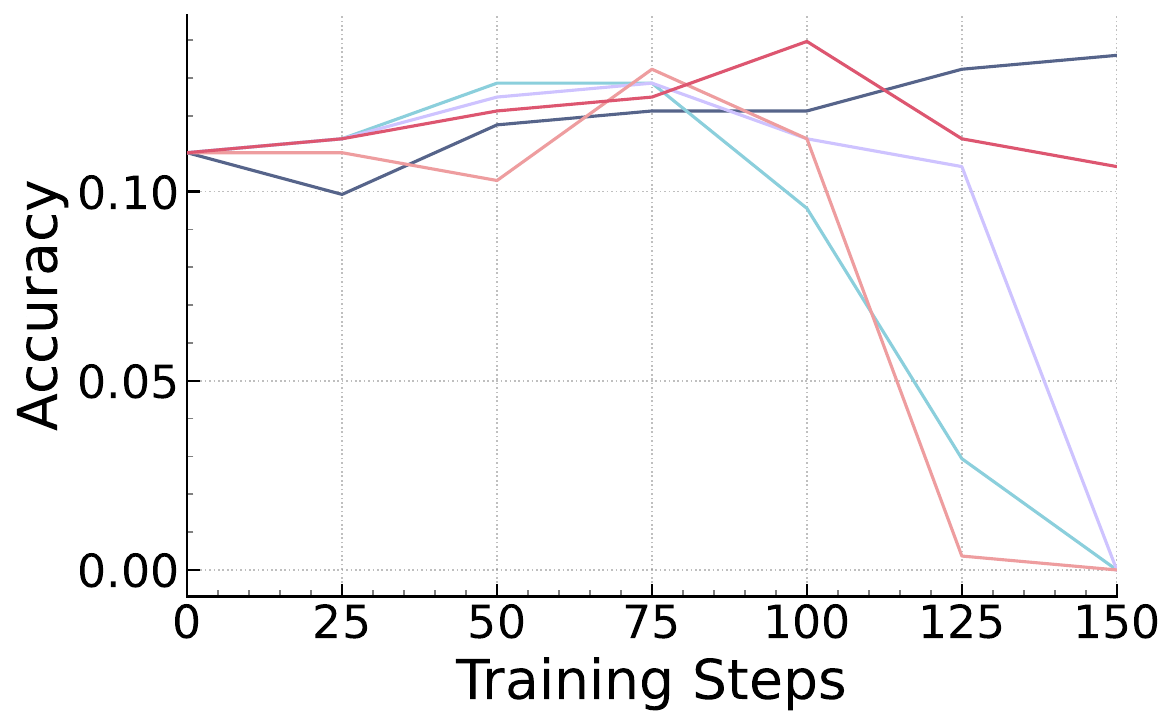}
    \caption{\small Llama-3.2-3B-Inst.}
    \label{fig:dynamics-llama-3p2-3b-acc}
  \end{subfigure}
  \begin{subfigure}[t]{0.24\textwidth}
    \centering
    \includegraphics[width=\linewidth]{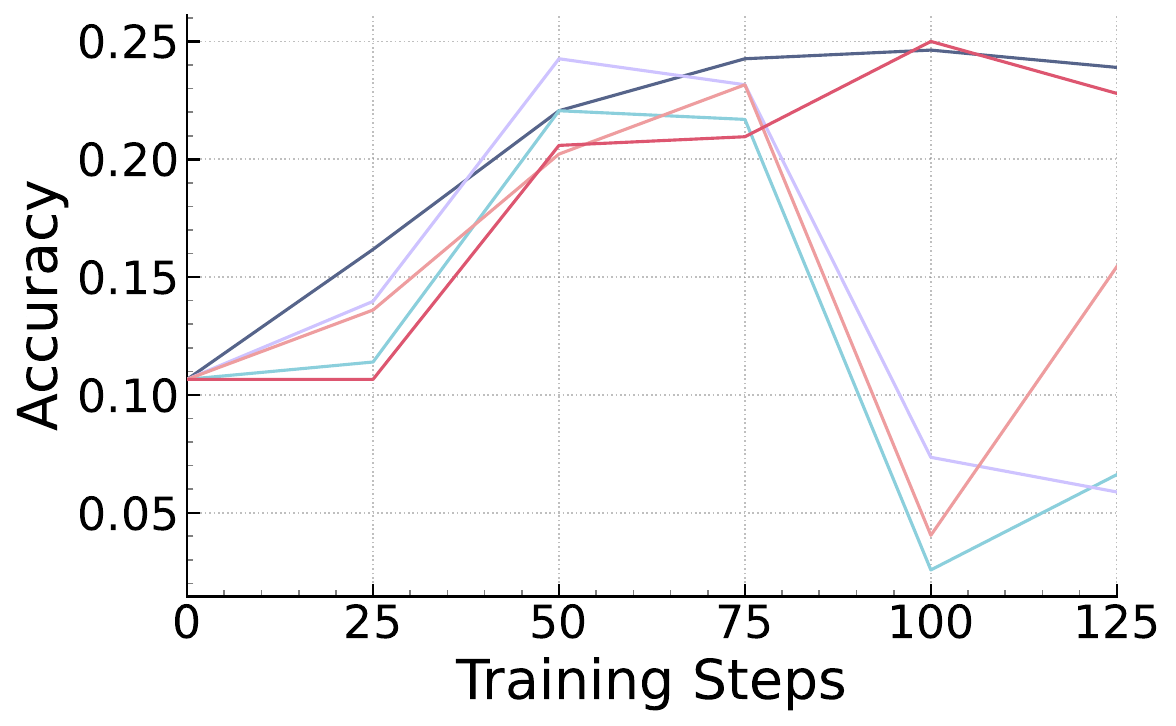}
    \caption{\small Qwen3-1.7B}
    \label{fig:dynamics-qwen-1p7b-acc}
  \end{subfigure}\hfill
  \begin{subfigure}[t]{0.24\textwidth}
    \centering
    \includegraphics[width=\linewidth]{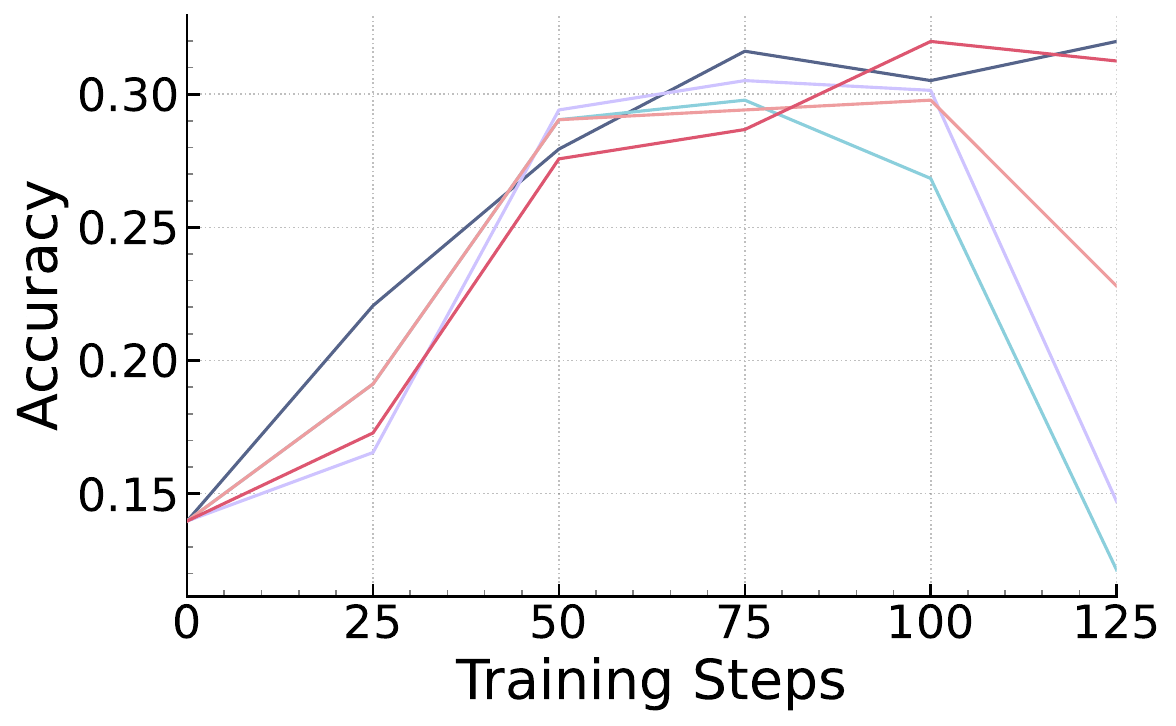}
    \caption{\small Qwen3-4B}
    \label{fig:dynamics-qwen-4b-acc}
  \end{subfigure}\hfill
  \begin{subfigure}[t]{0.24\textwidth}
    \centering
    \includegraphics[width=\linewidth]{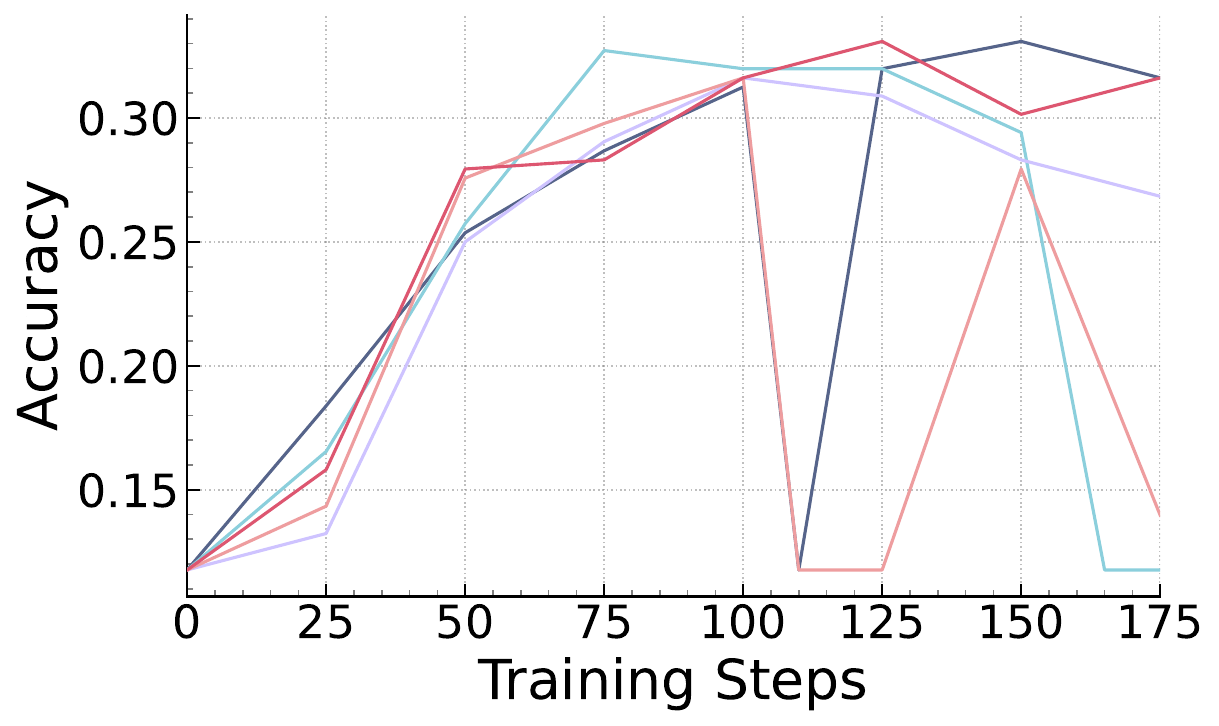}
    \caption{\small Qwen3-8B}
    \label{fig:dynamics-qwen-8b-acc}
  \end{subfigure}\hfill

  \caption{%
    \textbf{Cross-scale learning curves under large staleness.}
    Panels (a--d) report training reward and panels (e--h) report MinervaMath validation accuracy for Qwen3-1.7B/4B/8B and Llama-3.2-3B-Instruct.
    Curves compare synchronized GRPO, stale-rollout GRPO, M2PO, BAPO, and \textsc{GAC}.
  }
  
  \label{fig:dynamics-qwen-llama-2x4}
  \vspace{-3mm}
\end{figure*}

\paragraph{Experimental setup.}
We extensively evaluate \textsc{GAC} on seven mathematical reasoning benchmarks:
AIME24/25~\citep{aopsAIME}, AMC23/24~\citep{aopsAMC}, Math500~\citep{hendrycks2021math_arxiv}, OlympiadBench~\citep{he2024olympiadbench}, and MinervaMath~\citep{lewkowycz2022minerva}.
Experiments use widely adopted Qwen3 model family at three parameter scales (1.7B/4B/8B)~\citep{yang2025qwen3}, and Llama3.2-3B-Instruct~\citep{grattafiori2024llama3} to verify robustness across different architectures. 
For training, we adopt the mathematical reasoning dataset from DeepScaleR~\citep{luo2025deepscaler}.
For all models, the context length is set to 16K tokens.
For evaluation, we report Pass@1(avg@16) on AIME24/25 and Pass@1(avg@4) on the remaining benchmarks.
We vary the staleness level over $s \in \{4,8,16,32\}$ to assess performance under different degrees of asynchrony.
Unless otherwise stated, \textsc{GAC} uses a single unified threshold configuration $(c_{\text{low}}, c_{\text{high}})$ across all models, scales, and benchmarks without per-setting tuning; threshold sensitivity analysis is deferred to Appendix~\ref{app:threshold}.
We compare against three off-policy baselines: \textbf{GRPO}~\citep{shao2024deepseekmath}, standard GRPO on stale rollouts; \textbf{M2PO}~\citep{zheng2025m2po}, a trust-region method for large staleness; and \textbf{BAPO}~\citep{xi2025bapo}, an adaptive clipping approach for off-policy RL. Fully synchronized \textbf{GRPO ($s{=}0$)} serves as the on-policy reference.

% Unless otherwise stated, \textsc{GAC} adopts a single, fixed threshold configuration,
% $(c_{\text{low}}, c_{\text{high}})=(0.05,0.3)$, across all model architectures, scales, and benchmarks.
% These thresholds are not tuned per setting: $c_{\text{low}}$ is anchored to the empirical on-policy alignment range observed under synchronized GRPO training, while $c_{\text{high}}$ is set as a conservative boundary for severe alignment events.
% Appendix~\ref{app:threshold} provides the detailed rationale and threshold-sensitivity ablations, showing that \textsc{GAC} is robust to nearby threshold choices.

% \paragraph{Baselines.}
% We compare GAC against three off-policy baselines designed for stale-rollout optimization:
% \textbf{GRPO}~\citep{shao2024deepseekmath}, standard GRPO applied directly to stale rollouts;
% \textbf{M2PO}~\citep{zheng2025m2po}, a trust-region method designed to handle large staleness;
% and \textbf{BAPO}~\citep{xi2025bapo}, an adaptive clipping approach for off-policy RL.
% We also report \textbf{GRPO ($s{=}0$)}, fully synchronized training with fresh rollouts, as the on-policy reference.

\begin{figure}[!tbp]
  \centering
  \begin{subfigure}[t]{0.40\linewidth}
    \centering
    \includegraphics[width=\linewidth]{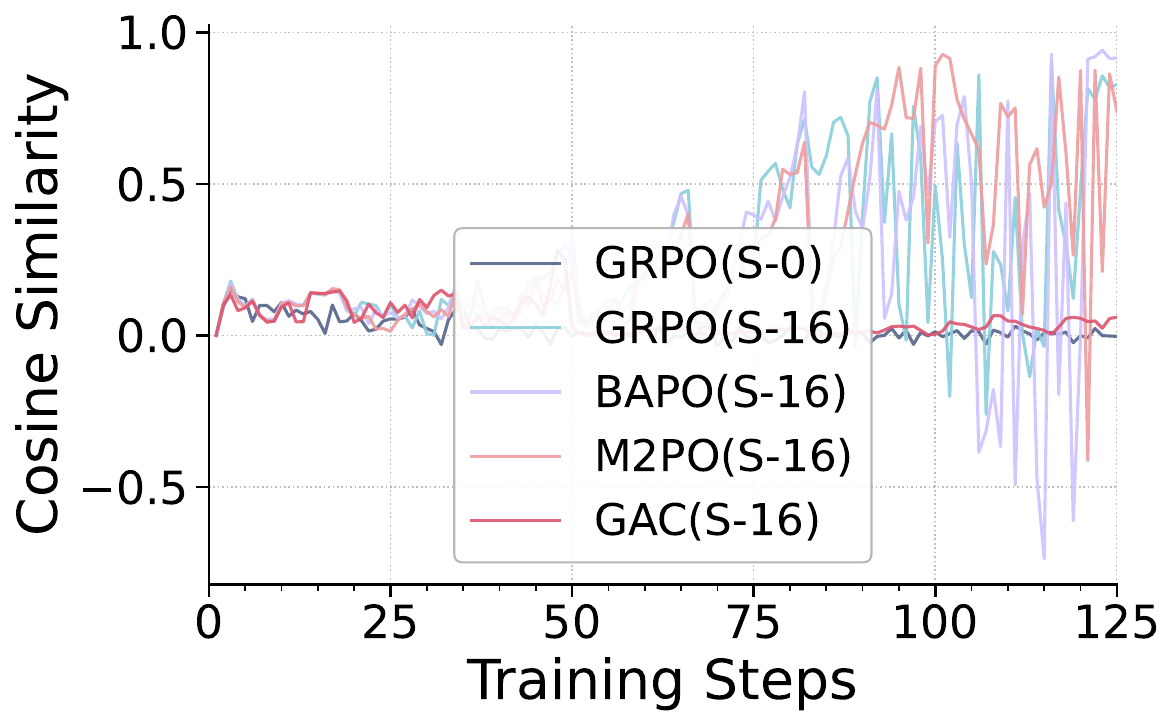}
    \caption{\small Qwen3-4B: $c_t$}
    \label{fig:ct-4b}
  \end{subfigure}
  \hspace{0.04\linewidth}
  \begin{subfigure}[t]{0.40\linewidth}
    \centering
    \includegraphics[width=\linewidth]{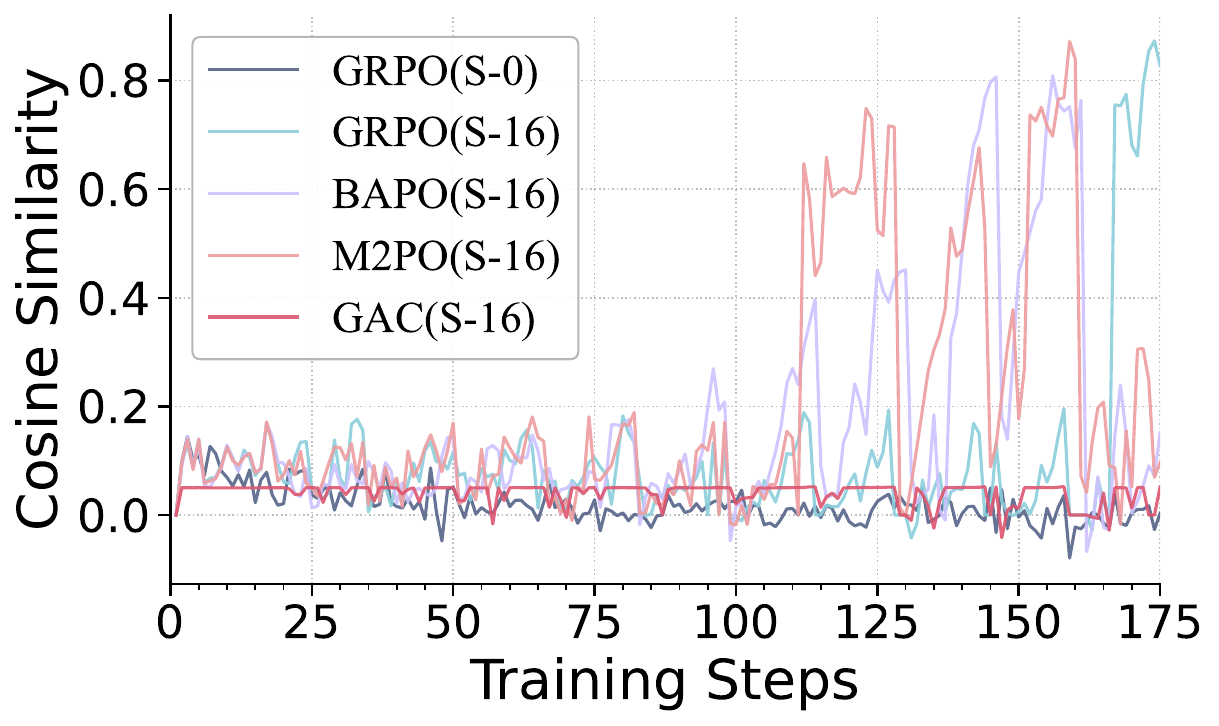}
    \caption{\small Qwen3-8B: $c_t$}
    \label{fig:ct-8b}
  \end{subfigure}

  \caption{
    \textbf{Gradient-alignment dynamics under staleness.}
    Consecutive-gradient cosine similarity $c_t$ for Qwen3-4B/8B, comparing synchronized GRPO, stale-rollout GRPO, M2PO, BAPO and GAC.
  }
  \label{fig:ct-4b-8b-1x2}
  \vspace{-6mm}
\end{figure}

\begin{table*}[t]
  \centering
  \caption{\textbf{Accuracy under Staleness.}
Accuracy (\%) on seven mathematical reasoning benchmarks for Qwen3 models (1.7B/4B/8B) and Llama-3.2-3B-Instruct.
We report synchronized on-policy GRPO (\texttt{GRPO-Sync}, $s{=}0$) as a \emph{reference} (shaded rows).
All other methods are evaluated under stale-rollout training with $s{=}16$.
\textbf{Avg.} denotes the mean accuracy across benchmarks.
\textbf{Bold} marks the best average accuracy among stale-rollout methods for each model.
The last column reports $\Delta_{\text{Avg}}$, computed as the improvement of \textsc{GAC} over the strongest stale-rollout baseline among \{GRPO, M2PO, BAPO\} for the same model.}

  \label{tab:main-results}
  \small
  \setlength{\tabcolsep}{5pt}
  \renewcommand{\arraystretch}{1.08}
  \newcommand{\dash}{\textcolor{gray}{--}}

  % columns: Method | 7 benchmarks | Avg | ΔAvg
  \begin{tabular}{l|ccccccc|c|c}
    \hline\hline
    \textbf{Method} &
    \textbf{AIME24} & \textbf{AIME25} &
    \textbf{AMC23}  & \textbf{AMC24} &
    \textbf{Math500} & \textbf{Miner.} & \textbf{Olymp.} &
    \textbf{Avg.} & $\boldsymbol{\Delta_{\textbf{Avg}}}$ \\
    \hline\hline

    \multicolumn{10}{c}{\textit{Llama-3.2-3B-Instruct}} \\
    \hline
    \rowcolor{gray!12} GRPO-Sync & 17.2 & 3.9 & 35.2 & 18.9 & 56.5 & 22.3 & 20.5 & 24.9 & \dash \\
    GRPO & 10.8 & 3.0 & 30.4 & 13.8 & 55.4 & 19.1 & 17.6 & 21.4 & \dash \\
    M2PO & 13.3 & 3.4 & 32.5 & 15.6 & 55.7 & 19.9 & 18.9 & 22.8 & \dash \\
    BAPO &  8.6 & 3.3 & 33.3 & 15.2 & 55.5 & 19.2 & 18.6 & 22.0 & \dash \\
    \rowcolor{blue!8} \textbf{GAC} & 17.0 & 3.7 & 36.1 & 19.0 & 56.2 & 21.6 & 20.9 & \textbf{24.9} & \textbf{+2.2} \\
    \hline

    \multicolumn{10}{c}{\textit{Qwen3-1.7B}} \\
    \hline
    \rowcolor{gray!12} GRPO-Sync & 17.2 & 17.0 & 55.5 & 54.3 & 76.1 & 25.1 & 27.1 & 38.9 & \dash \\
    GRPO & 11.1 & 12.5 & 47.8 & 44.4 & 69.2 & 22.0 & 23.4 & 32.9 & \dash \\
    M2PO & 13.9 & 13.8 & 45.3 & 41.2 & 73.3 & 23.1 & 24.5 & 33.6 & \dash \\
    BAPO & 16.1 & 13.6 & 50.2 & 40.0 & 71.0 & 24.2 & 23.6 & 34.1 & \dash \\
    \rowcolor{blue!8} \textbf{GAC} & 17.8 & 17.0 & 54.3 & 52.7 & 74.9 & 25.0 & 26.9 & \textbf{38.4} & \textbf{+4.3} \\
    \hline

    \multicolumn{10}{c}{\textit{Qwen3-4B}} \\
    \hline
    \rowcolor{gray!12} GRPO-Sync & 32.1 & 31.8 & 78.1 & 69.9 & 84.6 & 39.2 & 45.3 & 54.4 & \dash \\
    GRPO & 24.3 & 24.0 & 71.2 & 61.8 & 77.9 & 32.2 & 37.9 & 47.0 & \dash \\
    M2PO & 29.4 & 25.7 & 73.9 & 63.2 & 79.0 & 35.3 & 41.2 & 49.7 & \dash \\
    BAPO & 28.8 & 26.9 & 75.3 & 60.1 & 80.1 & 34.5 & 42.3 & 49.7 & \dash \\
    \rowcolor{blue!8} \textbf{GAC} & 31.9 & 31.7 & 81.1 & 66.5 & 84.2 & 37.9 & 44.7 & \textbf{54.0} & \textbf{+4.3} \\
    \hline

    \multicolumn{10}{c}{\textit{Qwen3-8B}} \\
    \hline
    \rowcolor{gray!12} GRPO-Sync & 29.2 & 29.0 & 80.4 & 66.8 & 84.1 & 42.3 & 44.5 & 53.8 & \dash \\
    GRPO & 24.0 & 23.7 & 73.8 & 62.4 & 82.5 & 34.8 & 35.0 & 48.0 & \dash \\
    M2PO & 25.7 & 23.4 & 75.9 & 64.0 & 82.7 & 37.1 & 39.9 & 49.8 & \dash \\
    BAPO & 26.3 & 22.4 & 74.7 & 66.9 & 82.0 & 36.2 & 36.5 & 49.3 & \dash \\
    \rowcolor{blue!8} \textbf{GAC} & 31.9 & 28.8 & 78.9 & 66.5 & 83.9 & 42.8 & 43.2 & \textbf{53.7} & \textbf{+3.9} \\
    \hline\hline
  \end{tabular}
  \vspace{-2mm}
\end{table*}

\subsection{Training Dynamics Under Staleness}
\label{sec:dynamics}

We evaluate the efficacy of GAC under large staleness ($s=16$, which saturates throughput gains in most asynchronous RL systems) by analyzing both convergence behavior and the underlying gradient-alignment mechanism.

\paragraph{Convergence behavior.}
Figure~\ref{fig:dynamics-qwen-llama-2x4} presents training reward and validation accuracy (from MinervaMath ~\citep{lewkowycz2022minerva} for Qwen3-1.7B/4B/8B and Llama-3.2-3B-Instruct.
All off-policy baselines exhibit unstable convergence across model scales: reward and accuracy undergo severe fluctuations, abrupt drops, and in extreme cases, complete collapse.
In contrast, \textsc{GAC} maintains smooth monotonic convergence throughout training, achieving stability comparable to synchronized GRPO.
Critically, \textsc{GAC} achieves this stability without sacrificing sample efficiency, converging at rates comparable to synchronized training without requiring extended optimization and reaching similar final performance.
The consistent behavior across the 1.7B--8B range and both model families demonstrates that explicit alignment control resolves the fundamental instability of asynchronous GRPO.

\paragraph{Gradient alignment regulation.}
Figure~\ref{fig:ct-4b-8b-1x2} reports the consecutive-gradient cosine similarity $c_t$ for Qwen3-4B and Qwen3-8B, revealing the mechanism behind \textsc{GAC}'s stabilization.
Synchronized training (GRPO, $s{=}0$) and \textsc{GAC} both keep $|c_t|$ tightly concentrated near zero with negligible variance throughout optimization, indicating near-orthogonal consecutive gradients.
In contrast, the off-policy baselines—stale-rollout GRPO, M2PO and BAPO—exhibit a sustained rise in $|c_t|$ that escalates to high magnitudes with severe oscillations.
\textsc{GAC} restores gradient-alignment dynamics to the same low-alignment regime observed in synchronized training, consistent with its intended mechanism of regulating update geometry under staleness.

\begin{wrapfigure}{r}{0.4\textwidth}
  \centering
    \includegraphics[width=0.39\textwidth]{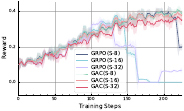}
  \vspace{-0.1cm}
  \caption{
    \textbf{Robustness across staleness levels.}
    Training reward trajectories under $s\in\{8,16,32\}$.
    \textsc{GAC} remains stable as staleness increases, while stale-rollout GRPO progressively degrades.
  }
  \label{fig:staleness-sweep}
  \vspace{-0.1cm}
\end{wrapfigure}
\subsection{Model Performance}
\label{sec:main-results}
Table~\ref{tab:main-results} summarizes accuracy on seven reasoning benchmarks under a large staleness setting ($s{=}16$), with synchronized GRPO ($s{=}0$) reported as an on-policy reference.
Across model families and scales, stale-rollout GRPO and strong off-policy baselines (M2PO, BAPO) exhibit substantial accuracy degradation under staleness, consistently underperforming the on-policy reference.
This persistent gap underscores the fundamental challenge of maintaining stable optimization under asynchrony.
In contrast, \textsc{GAC} achieves the highest average accuracy among stale-rollout methods and largely closes the gap to synchronized training.
On Qwen3-8B, \textsc{GAC} achieves 53.7\% compared to 53.8\% for the on-policy reference, while off-policy baselines fall 4--5 points below.
The gap is most pronounced on challenging benchmarks; for instance, on AIME24/25 with Qwen3-4B, \textsc{GAC} attains 31.9/31.7, essentially matching synchronized training (32.1/31.8), while off-policy baselines degrade substantially.

% \begin{wrapfigure}{r}{0.40\linewidth}
%   \vspace{2mm}
%   \centering
%   \includegraphics[width=1\linewidth]{figure/experiment/stale.pdf}
%   \caption{
%     \textbf{Robustness across staleness levels.}
%     Training reward trajectories under $s\in\{8,16,32\}$.
%     \textsc{GAC} remains stable as staleness increases, while stale-rollout GRPO progressively degrades.
%   }
%   \label{fig:staleness-sweep}
%   % \vspace{0.7em}
% \end{wrapfigure}

\paragraph{Robustness to varying staleness.}
\label{sec:ablations}
We extend our analysis to $s\in\{4,8,16,32\}$ to assess generalization across asynchrony regimes. Figure~\ref{fig:staleness-sweep} shows that \textsc{GAC} maintains stable convergence across all staleness levels, including $s{=}32$ where stale-rollout GRPO collapses entirely. In contrast to the progressive degradation observed in Section~\ref{subsec:problem-severity}, \textsc{GAC} achieves stable performance across the entire staleness range.
\vspace{-4pt}

\section{Related Work}
Due to space limitations, we focus on the most directly related work in the main text and defer a more detailed discussion to Appendix~\ref{sec:related-work-extended}.

As reinforcement learning workloads scale, end-to-end training is increasingly constrained by heterogeneous pipeline stages, where rollout generation and reward evaluation often dominate wall-clock time.
A widely adopted systems solution is to use disaggregated actor--learner architectures that decouple trajectory collection from optimization, thereby improving throughput and mitigating straggler effects.
Early asynchronous methods such as A3C~\cite{mnih2016a3c} demonstrated that relaxing strict synchronization can substantially improve scalability, while IMPALA~\cite{espeholt2018impala} introduced importance-corrected actor--learner updates (via V-trace) to control off-policy drift at scale.
Subsequent systems refined this paradigm through more aggressive data reuse and communication-efficient designs, including Ape-X~\cite{horgan2018distributed} and SEED RL~\cite{espeholt2020seedrl}.
Collectively, these works establish asynchrony and staleness as practical ingredients for high-throughput RL, while also highlighting that delayed gradients and stale rollouts can fundamentally alter optimization behavior and introduce additional instability.

Motivated by similar efficiency bottlenecks in large language model post-training, recent work revisits asynchronous execution for RLHF and RLVR under expensive inference-time rollouts.
Asynchronous RLHF formulations explicitly exploit off-policy data reuse to improve throughput~\cite{noukhovitch2024asynchronousrlhf}, while large-scale LLM-RL infrastructures such as AReaL~\cite{fu2025areal} and AsyncFlow~\cite{han2025asyncflow} support decoupled rollout and optimization for scalable reasoning post-training.
However, this decoupling naturally introduces stale-rollout gradients, where updates are computed using trajectories generated by lagged behavior policies.
Under large staleness, this effect amplifies distribution mismatch and destabilizes GRPO-style optimization.
Recent off-policy stabilization methods for LLM-RL, such as M2PO~\cite{zheng2025m2po} and BAPO~\cite{xi2025bapo}, primarily address this issue by strengthening GRPO-style trust-region surrogates through adaptive clipping or likelihood-ratio shaping.

In contrast, our method operates at the optimizer interface and is complementary to these approaches.
Rather than modifying surrogate objectives or correcting distributional shift, we directly regulate the geometry of the aggregated policy update by suppressing progress along stale-aligned directions.
This provides a lightweight, system-agnostic stabilization mechanism that targets the alignment-driven instability induced by asynchronous execution.
\vspace{-4pt}
\section{Conclusion}
\label{sec:conclusion}
\vspace{-4pt}
This paper studies asynchronous GRPO from a training-dynamics perspective and demonstrates that asynchrony can fundamentally alter training dynamics, leading to collapse.
Through both empirical evidence and theoretical analysis, we identify persistently elevated consecutive-gradient cosine similarity as a reliable precursor to training collapse, capturing the accumulation of stale-aligned updates.
Motivated by this insight, we propose \textsc{GAC}, a lightweight gradient-space controller that stabilizes asynchronous GRPO by explicitly regulating progress along stale-aligned directions while preserving informative residual updates.
\textsc{GAC} operates at the optimizer interface, is system- and algorithm-agnostic, and incurs only constant per-step overhead.
Our results demonstrate that \textsc{GAC} effectively stabilizes asynchronous GRPO and restores robust training dynamics under staleness.
Our evaluation focuses on RLVR-style mathematical reasoning tasks; extending GAC to broader settings such as instruction-following and agentic tool-use remains future work.
Furthermore, while GAC closes the performance gap to synchronized training, it is designed to stabilize asynchronous optimization rather than surpass the on-policy reference, and its effectiveness under more aggressive staleness or heterogeneous pipeline configurations warrants further investigation.

\clearpage
\newpage
% \section*{Impact Statement}
% This paper presents work whose goal is to advance the training of reinforcement learning. There are many potential societal consequences of our work, none of which we feel must be specifically highlighted here.

\bibliographystyle{unsrt}
\bibliography{reference}

%%%%%%%%%%%%%%%%%%%%%%%%%%%%%%%%%%%%%%%%%%%%%%%%%%%%%%%%%%%%%%%%%%%%%%%%%%%%%%%
%%%%%%%%%%%%%%%%%%%%%%%%%%%%%%%%%%%%%%%%%%%%%%%%%%%%%%%%%%%%%%%%%%%%%%%%%%%%%%%
% APPENDIX
%%%%%%%%%%%%%%%%%%%%%%%%%%%%%%%%%%%%%%%%%%%%%%%%%%%%%%%%%%%%%%%%%%%%%%%%%%%%%%%
%%%%%%%%%%%%%%%%%%%%%%%%%%%%%%%%%%%%%%%%%%%%%%%%%%%%%%%%%%%%%%%%%%%%%%%%%%%%%%%

\clearpage
\appendix
\section{Additional Details and Discussion}\label{appdx:additional}

\subsection{Computing Consecutive-Gradient Cosine Similarity}
\label{app:ct-computation}
We provide implementation details for the consecutive-gradient cosine similarity $c_t$ in Eq.~\eqref{eq:cos-sim}.
At update step $t$, we define $\hat{g}_t$ as the aggregated policy gradient \emph{estimate} after completing backpropagation and gradient accumulation for the GRPO objective (using the rollout data available at step $t$, which may be stale).
We conceptually treat $\hat{g}_t$ as the flattened concatenation of all trainable parameter gradients.
The metric $c_t$ is scale-invariant and serves as a geometric diagnostic of successive update directions.

\paragraph{Efficient computation under data parallelism.}
Under distributed training with gradient sharding (e.g., FSDP, DeepSpeed ZeRO), we compute $c_t$ without materializing full gradients.
Let $\hat{g}_t^{(r)}$ and $\hat{g}_{t-1}^{(r)}$ denote gradient shards on rank $r$.
Each rank computes three local dot products:
\begin{equation}
\langle \hat{g}_t^{(r)}, \hat{g}_{t-1}^{(r)}\rangle, \quad
\langle \hat{g}_t^{(r)}, \hat{g}_t^{(r)}\rangle, \quad
\langle \hat{g}_{t-1}^{(r)}, \hat{g}_{t-1}^{(r)}\rangle,
\end{equation}
followed by a single all-reduce to obtain global quantities
\begin{equation}
\langle \hat{g}_t, \hat{g}_{t-1}\rangle, \quad
\|\hat{g}_t\|_2^2, \quad
\|\hat{g}_{t-1}\|_2^2.
\end{equation}
We then compute
\begin{equation}
c_t = \frac{\langle \hat{g}_t, \hat{g}_{t-1}\rangle}{\sqrt{\|\hat{g}_t\|_2^2 \cdot \|\hat{g}_{t-1}\|_2^2} + \epsilon},
\end{equation}
where $\epsilon=10^{-8}$ ensures numerical stability.

\paragraph{Evaluation protocol.}
Unless otherwise stated, $c_t$ is computed on raw aggregated gradient estimates $\hat{g}_t$ before optimizer transformations or \textsc{GAC} intervention.

\subsection{Computational and Memory Overhead}
\label{app:gac-overhead}

\paragraph{Computational overhead.} 
\textsc{GAC} introduces only lightweight overhead beyond standard gradient computation.
Per update, it evaluates the consecutive-gradient cosine similarity $c_t$ using a constant number of inner products on sharded gradients and a constant-size collective reduction to aggregate the required statistics across workers.
Under distributed training with gradient sharding (e.g., FSDP or DeepSpeed ZeRO), this adds negligible communication volume relative to routine training collectives, with overhead primarily arising from an additional synchronization point.

When $|c_t|$ falls in the projection regime ($c_{\text{low}} < |c_t| < c_{\text{high}}$), \textsc{GAC} applies an adaptive rank-one correction to the current gradient:
\begin{equation}
\hat{g}_t' \;=\; \beta\,\hat{g}_t \;+\; (\alpha-\beta)\,\langle \hat{g}_t, u_{t-1}\rangle u_{t-1},
\end{equation}
where $\alpha = c_{\text{low}}/|c_t|$ is computed dynamically and we use $\beta = 1$.
This intervention consists of a small number of elementwise linear operations (a scale and an addition) and therefore incurs $O(d_{\text{local}})$ memory-bandwidth work per update, where $d_{\text{local}}$ is the number of gradient elements on the local rank.
The projection is applied in-place to parameter gradients and does not require additional forward/backward passes.

\paragraph{Memory footprint.}
\textsc{GAC} maintains a single additional gradient snapshot for the previous update direction $\hat{g}_{t-1}$, resulting in an $O(d_{\text{local}})$ persistent memory overhead per rank.
Any temporary buffers used to flatten gradients for similarity computation are short-lived and do not affect the asymptotic memory complexity.

\subsection{Pseudocode for \textsc{GAC}}
\label{appdx:gac-pseudocode}

Algorithm~\ref{alg:gac} provides the complete pseudocode of \textsc{GAC}. 
At each update, \textsc{GAC} computes the consecutive-gradient cosine similarity $c_t$ and selects among three behaviors: a standard gradient step in the low-alignment regime, an adaptive projection step that suppresses the component aligned with the previous gradient, or an update skip when alignment exceeds the high-threshold safeguard.

\begin{algorithm}[t]
\caption{Gradient Alignment Control (GAC)}
\label{alg:gac}
\begin{algorithmic}[1]
\STATE \textbf{Input:} thresholds $c_{\text{low}}$, $c_{\text{high}}$, learning rate $\eta$
\STATE \textbf{Initialize:} $\theta_0$, $\hat{g}_0 \leftarrow 0$
\FOR{$t = 1, 2, \ldots, T$}
    \STATE Compute gradient $\hat{g}_t$ from stale rollouts under $\pi_{\theta_{t-\tau}}$
    \STATE $c_t \leftarrow \langle \hat{g}_t, \hat{g}_{t-1}\rangle / (\|\hat{g}_t\|_2\,\|\hat{g}_{t-1}\|_2)$
    \IF{$|c_t| \le c_{\text{low}}$}
        \STATE $\theta_{t} \leftarrow \theta_{t-1} + \eta\,\hat{g}_t$
    \ELSIF{$|c_t| \ge c_{\text{high}}$}
        \STATE $\theta_{t} \leftarrow \theta_{t-1}$
    \ELSE
        \STATE $u \leftarrow \hat{g}_{t-1}/\|\hat{g}_{t-1}\|_2$, \quad $\alpha \leftarrow c_{\text{low}}/|c_t|$
        \STATE $g_{\parallel} \leftarrow \langle \hat{g}_t, u\rangle u$, \quad $g_{\perp} \leftarrow \hat{g}_t - g_{\parallel}$
        \STATE $\hat{g}_t' \leftarrow \alpha\, g_{\parallel} + g_{\perp}$
        \STATE $\theta_{t} \leftarrow \theta_{t-1} + \eta\,\hat{g}_t'$
    \ENDIF
\ENDFOR
\end{algorithmic}
\end{algorithm}
\section{Extended Related Work}
\label{sec:related-work-extended}

\paragraph{RLVR and GRPO Variants.}
Reinforcement learning has become a standard paradigm for post-training large language models toward complex objectives.
Algorithmically, modern RL for LLMs largely builds upon trust-region policy optimization, from TRPO~\cite{schulman2015trpo} to PPO~\cite{schulman2017ppo}, which stabilizes updates via constrained or clipped surrogate objectives.
These foundations enabled RLHF to achieve strong empirical gains in summarization, instruction following, and safety alignment~\cite{ziegler2019finetuning,ouyang2022training,bai2022training}.
More recently, reinforcement learning with verifiable rewards (RLVR) has gained prominence in reasoning-centric post-training, substituting learned reward models with programmatic verifiers that yield sparse, binary feedback~\citep{lightman2023lets,uesato2022solving}.
This paradigm shift necessitates optimization methods robust to high-variance, outcome-level signals.
GRPO~\citep{shao2024deepseekmath,deepseek2025r1} has emerged as a canonical RLVR algorithm, inspiring a growing family of variants designed to enhance stability and credit assignment under verifier sparsity. Notable examples include DAPO~\citep{yu2025dapo}, which decouples clipping design from sampling allocation to prioritize informative rollouts; GSPO~\citep{zheng2024gspo}, which applies sequence-level rather than token-level trust-region control; SAPO~\citep{gao2025sapo}, which adapts clipping thresholds dynamically to improve stability-exploration tradeoffs; and TR-GRPO~\citep{le2025trgrpo}, which introduces token-aware modifications for robust credit assignment. Beyond surrogate-level redesigns, Light-R1~\citep{wen2025lightr1} demonstrates a practical post-training pipeline for long-CoT reasoning in which GRPO is employed as the RL stage on long-CoT models, further improving mathematical reasoning performance. 
While these methods primarily improve GRPO via surrogate-level redesigns,
we take an optimizer-level perspective that stabilizes the policy update by directly controlling its geometry.

\paragraph{Asynchronous RL Training Frameworks and Off-policy RL Algorithms.}
As RL workloads scale, end-to-end training is increasingly constrained by heterogeneous pipeline stages, where rollout generation and reward evaluation often dominate wall-clock time.
A common systems solution is to adopt disaggregated actor--learner architectures that decouple trajectory collection from optimization, thereby improving throughput and mitigating straggler effects.
Early asynchronous methods such as A3C~\cite{mnih2016a3c} demonstrated that relaxing strict synchronization can substantially improve scalability, while IMPALA~\cite{espeholt2018impala} introduced an importance-corrected actor--learner design (via V-trace) to control off-policy drift at scale.
Subsequent systems further refined the same principle through more aggressive data reuse and communication efficiency, including replay-centric distributed training in Ape-X~\cite{horgan2018distributed} and accelerated centralized inference in SEED RL~\cite{espeholt2020seedrl}.
Collectively, these works establish asynchrony and staleness as practical ingredients for high-throughput RL, while also underscoring that delayed gradients and stale rollouts can introduce nontrivial off-policy effects that may require additional stabilization when instantiated in GRPO-style LLM post-training.

Motivated by similar efficiency bottlenecks in LLM post-training, recent work revisits asynchronous execution for RLHF and RLVR under expensive inference-time rollouts.
For example, asynchronous RLHF formulations explicitly exploit off-policy data reuse to improve throughput~\cite{noukhovitch2024asynchronousrlhf},
while large-scale LLM-RL infrastructures such as AReaL~\cite{fu2025areal} and AsyncFlow~\cite{han2025asyncflow}
support decoupled rollout--train execution for scalable reasoning post-training. However, this decoupling naturally introduces \emph{stale-rollout gradients}, where updates are computed on trajectories generated by lagged behavior policies, amplifying distribution mismatch and destabilizing GRPO-style optimization under large staleness.
To address this challenge, recent off-policy stabilization methods for LLM-RL, such as M2PO~\cite{zheng2025m2po} and BAPO~\cite{xi2025bapo},
primarily reinforce GRPO-style trust-region surrogates through adaptive clipping and likelihood-ratio shaping, which improves robustness under stale data.
In contrast, our method is complementary and operates at the optimizer interface:
we directly regulate the geometry of the aggregated update by suppressing progress along stale-aligned directions,
providing a lightweight, system-agnostic stabilization layer for asynchronous RL pipelines.
\section{Implementation Details}
\label{app:impl}

\begin{table}[t]
\centering
\caption{\textbf{GRPO hyperparameters.} Configuration used across all experiments.}
\label{tab:grpo_hyperparams}
\small
\setlength{\tabcolsep}{7pt}
\renewcommand{\arraystretch}{1.08}
\begin{tabular}{l c}
\toprule
\textbf{Parameter} & \textbf{Value} \\
\midrule
\multicolumn{2}{c}{\textit{Optimization}} \\
\midrule
Optimizer & AdamW \\
Learning rate ($\eta$) & $1\times 10^{-6}$ \\
AdamW $\beta$'s & $(0.9,\,0.999)$ \\
AdamW $\epsilon_{\mathrm{adam}}$ & $1\times 10^{-8}$ \\
Weight decay & $1\times 10^{-2}$ \\
Clip ratio ($\epsilon_{\mathrm{clip}}$) & 0.2 \\
Entropy coefficient & 0.001 \\
KL coefficient & 0.001 \\
KL loss type & \texttt{low\_var\_kl} \\
Gradient clipping & True \\
\midrule
\multicolumn{2}{c}{\textit{Training}} \\
\midrule
Train batch size & 256 \\
Responses per prompt ($n$) & 8 \\
Critic warmup steps & 0 \\
Evaluation frequency & 25 \\
\midrule
\multicolumn{2}{c}{\textit{Generation}} \\
\midrule
Sampling & True \\
Temperature & 0.6 \\
Top-$p$ & 0.95 \\
Max sequence length & 8k \\
\bottomrule
\end{tabular}
\end{table}

\subsection{Testbed}
\label{app:impl:testbed}
All experiments are conducted on a single 8-GPU machine equipped with NVIDIA H800-80GB GPUs, interconnected via 400~GB/s NVLink.
Our software stack uses CUDA 12.6, PyTorch 2.7.1, NCCL 2.26.2, vLLM 0.10.0, and FlashAttention 2.7.4.post1.

\subsection{Training framework}
\label{app:impl:framework}
All experiments are implemented on top of the open-source VERL~\citep{sheng2024hybridflow} framework, which provides an end-to-end RL post-training pipeline for large language models.
VERL couples rollout generation with policy optimization in a modular design, enabling both fully synchronized execution and stale-rollout (asynchronous) training within the same codebase.

\subsection{Hyperparameters}
\label{app:impl:hyperparams}

\paragraph{Common hyperparameters.}
We use a single shared GRPO training recipe across all methods and model scales to ensure controlled comparisons.
In particular, we fix the learning rate to $1\times10^{-6}$ and the train batch size to 256 in all experiments.
Unless otherwise stated, the remaining optimization and generation settings are also kept identical, including the clipping threshold, entropy/KL regularization, and decoding parameters.
Table~\ref{tab:grpo_hyperparams} summarizes the full set of common hyperparameters.

\paragraph{\textsc{GAC} thresholds.}
\label{app:threshold}
\textsc{GAC} introduces two alignment thresholds, $c_{\text{low}}$ and $c_{\text{high}}$, which partition the observed consecutive-gradient cosine similarity $c_t$ into the three operational regimes described in Section~\ref{sec:method:algorithm}.
Unless otherwise stated, we use a single default configuration across all experiments, setting $c_{\text{low}}=0.05$ and $c_{\text{high}}=0.3$ for both Qwen3 and Llama models and across scales.
For numerical stability when computing $c_t$, we use $\epsilon=10^{-8}$ in the cosine denominator.

\begin{table}[t]
\centering
\caption{\textbf{On-policy ($s{=}0$) statistics of consecutive-gradient alignment.}
We report the 90th percentile $q_{0.9}(|c_t|)$, the maximum $\max |c_t|$, and the fraction of updates with $|c_t|\le 0.05$ (computed over updates after step 50 to exclude early transients).
These values motivate anchoring $c_{\text{low}}$ to the on-policy range and setting $c_{\text{high}}$ well above it.}
\label{tab:onpolicy_ct_stats}
\small
\setlength{\tabcolsep}{7pt}
\renewcommand{\arraystretch}{1.05}
\begin{tabular}{lccc}
\toprule
\textbf{Model} & $q_{0.9}(|c_t|)$ & $\max |c_t|$ & $\Pr(|c_t|\le 0.05)$ \\
\midrule
Qwen3-1.7B & 0.0471 & 0.0924 & 0.91 \\
Llama-3.2-3B-Instruct & 0.0687 & 0.1276 & 0.83 \\
Qwen3-4B & 0.0296 & 0.0645 & 0.99 \\
Qwen3-8B & 0.0317 & 0.0625 & 0.98 \\
\bottomrule
\end{tabular}
\end{table}

\paragraph{Rationale for choosing $c_{\text{low}}$ and $c_{\text{high}}$.}
We choose the default thresholds to be interpretable and transferable across model families and scales.
The lower threshold $c_{\text{low}}$ is anchored to the empirical on-policy alignment range measured under synchronized GRPO ($s{=}0$).
Table~\ref{tab:onpolicy_ct_stats} summarizes on-policy statistics of $|c_t|$ (computed after step 50 to exclude early transients), showing consistently small inter-step alignment across models and scales.
Setting $c_{\text{low}}=0.05$ thus places typical on-policy updates within the low-alignment regime, providing a principled reference level without per-model tuning.
The upper threshold $c_{\text{high}}$ specifies a conservative boundary for extreme alignment events; we set $c_{\text{high}}=0.3$ to be well separated from the on-policy range (Table~\ref{tab:onpolicy_ct_stats}), providing a safeguard under large staleness.

\paragraph{Threshold robustness.}
Table~\ref{tab:gac_thr_ablation_acc} reports a local $3\times3$ sweep around $(c_{\text{low}},c_{\text{high}})=(0.05,0.3)$.
Performance remains robust across the grid, indicating that \textsc{GAC} is robust to threshold choice; we thus use $(0.05,0.3)$ unchanged in all main experiments.

\begin{figure*}[t]
  \centering

  % ---------- (a) curve ----------
  \begin{subfigure}[t]{0.62\textwidth} % was 0.70
    \centering
    \includegraphics[width=0.92\linewidth]{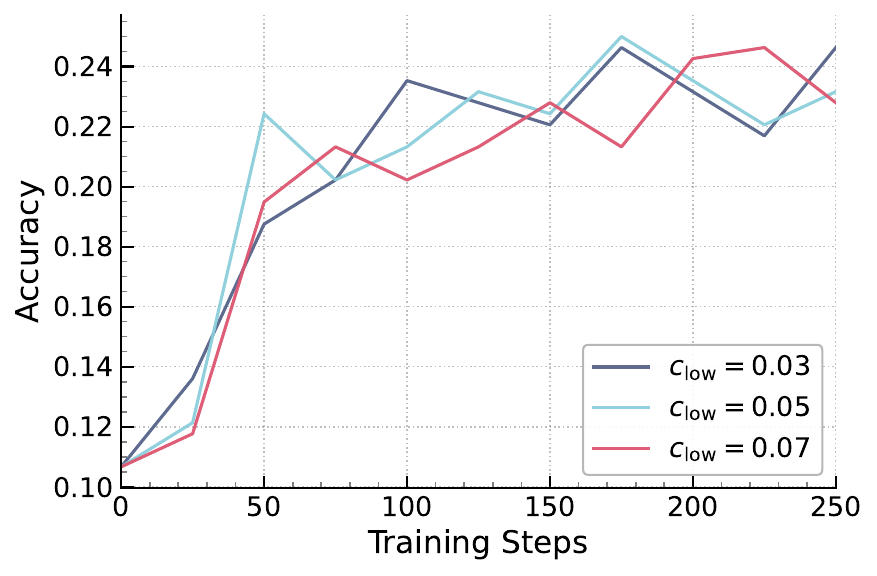} % slightly smaller
    \caption{\textbf{Fixed $c_{\text{high}}{=}0.3$.} OlympiadBench validation accuracy trajectories under $s{=}16$ for Qwen3-1.7B while sweeping $c_{\text{low}}$.}
    \label{fig:gac_thr_sweep_curve}
  \end{subfigure}

  \vspace{0.45em} % slightly tighter

  % ---------- (b) table ----------
  % \begin{subfigure}[t]{0.62\textwidth}
  % \centering
  % \small
  % \setlength{\tabcolsep}{7pt}
  % \renewcommand{\arraystretch}{1.12}
  % \begin{tabular}{@{}l|ccccc@{}}
  %   \toprule
  %   $c_{\text{low}} \ \backslash \ c_{\text{high}}$
  %     & $0.10$ & $0.20$ & $0.30$ & $0.40$ & $0.70$ \\
  %   \midrule
  %   $0.01$ & \texttt{--} & \texttt{--} & \texttt{--} & \texttt{--} & \texttt{--} \\
  %   $0.03$ & \texttt{--} & 26.8 & 26.3 & 25.5 & \texttt{--} \\
  %   $0.05$ & \texttt{--} & 26.2 & \cellcolor{gray!20}26.9 & 27.1 & \texttt{--} \\
  %   $0.07$ & \texttt{--} & 25.7 & 26.6 & 24.9 & \texttt{--} \\
  %   $0.10$ & \texttt{--} & \texttt{--} & \texttt{--} & \texttt{--} & \texttt{--} \\
  %   \bottomrule
  % \end{tabular}
  % \caption{\textbf{Grid sensitivity.} Validation accuracy on OlympiadBench (Qwen3-1.7B, $s{=}16$) across $(c_{\text{low}},c_{\text{high}})$; the default $(0.05,0.3)$ is highlighted.}
  % \label{tab:gac_thr_ablation_acc}
  % \end{subfigure}
  % ---------- (b) table ----------
\begin{subfigure}[t]{0.62\textwidth} % was 0.70
  \centering
  \small
  \setlength{\tabcolsep}{9pt} % slightly tighter
  \renewcommand{\arraystretch}{1.12} % slightly tighter
  \begin{tabular}{@{}l|ccc@{}}
    \toprule
    & $c_{\text{high}}{=}0.2$ & $c_{\text{high}}{=}0.3$ & $c_{\text{high}}{=}0.4$ \\
    \midrule
    $c_{\text{low}}{=}0.03$ & 26.8 & 26.3 & 25.5 \\
    $c_{\text{low}}{=}0.05$ & 26.2 & \cellcolor{gray!20}26.9 & 27.1 \\
    $c_{\text{low}}{=}0.07$ & 25.7 & 26.6 & 24.9 \\
    \bottomrule
  \end{tabular}
  \caption{\textbf{Grid sensitivity.} Validation accuracy on OlympiadBench (Qwen3-1.7B, $s{=}16$) across $(c_{\text{low}},c_{\text{high}})$; the default is highlighted.}
  \label{tab:gac_thr_ablation_acc}
\end{subfigure}

  \caption{\textbf{Threshold robustness of \textsc{GAC}.}
  Ablations around the default $(c_{\text{low}},c_{\text{high}}){=}(0.05,0.3)$ show that accuracy varies only mildly across nearby settings, supporting practical robustness without per-model retuning.}
  \label{fig:gac_thr_ablation_combined}
\end{figure*}

\section{On-Policy Gradient Alignment Across Model Scales}
\label{app:onpolicy-ct}
Figure~\ref{fig:app-ct-s0-vs-s16-2x2} demonstrates that synchronized GRPO maintains 
low consecutive-gradient alignment across model families and scales. Across all four models, on-policy training ($s{=}0$) keeps $|c_t|$ consistently within $[-0.05, 0.05]$ throughout optimization, indicating near-orthogonal consecutive gradients and stable descent. This narrow alignment range is remarkably robust: it holds uniformly across Qwen3-1.7B/4B/8B and Llama-3.2-3B-Instruct, motivating our choice of $c_{\text{low}}{=}0.05$ as a principled, model-agnostic threshold anchored to the empirical on-policy regime (see Appendix~\ref{app:impl}).
Asynchronous training ($s{=}16$) exhibits substantially higher and more volatile $|c_t|$, with large-magnitude oscillations observed consistently across all four models, confirming that alignment-driven instability is a general phenomenon of stale-rollout training rather than a model-specific artifact.

This gradient-geometry perspective constitutes a key contribution of our work.
We show that consecutive-gradient cosine similarity provides a direct, early indicator of training stability that precedes these conventional diagnostics, enabling proactive 
intervention before performance degradation occurs.

\begin{figure}[t]
  \centering
  \begin{subfigure}[t]{0.49\linewidth}
    \centering
    \includegraphics[width=\linewidth]{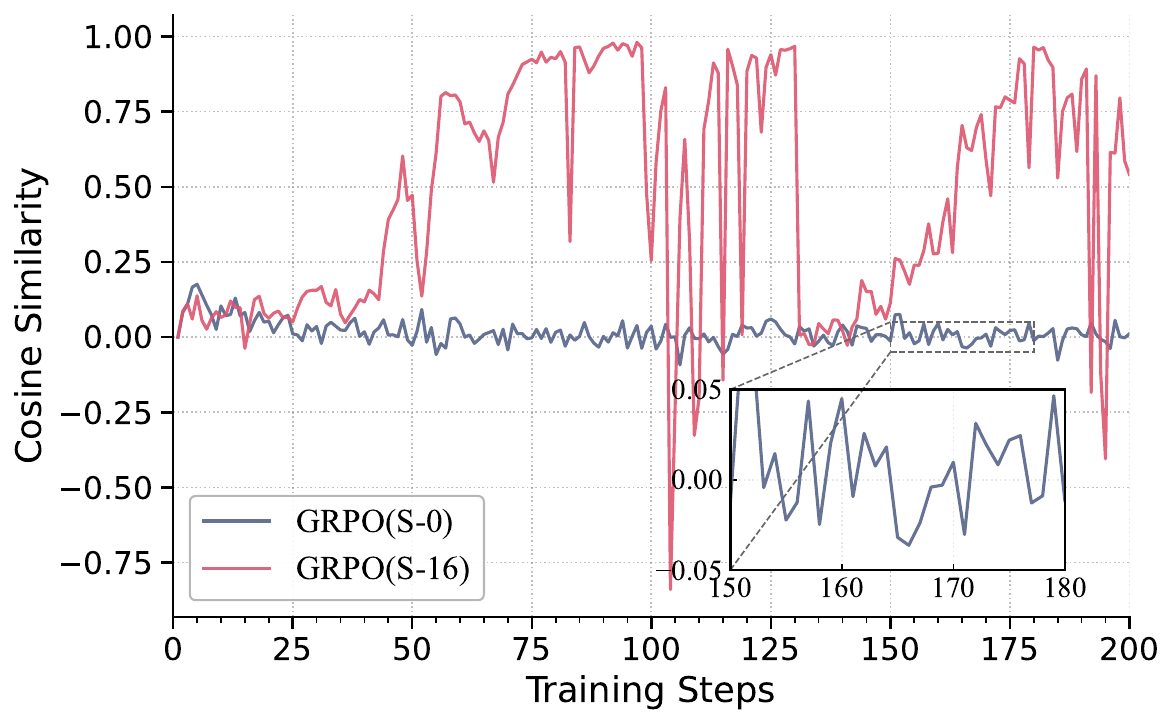}
    \caption{\small Qwen3-1.7B}
    \label{fig:app-ct-qwen-1p7b}
  \end{subfigure}\hfill
  \begin{subfigure}[t]{0.49\linewidth}
    \centering
    \includegraphics[width=\linewidth]{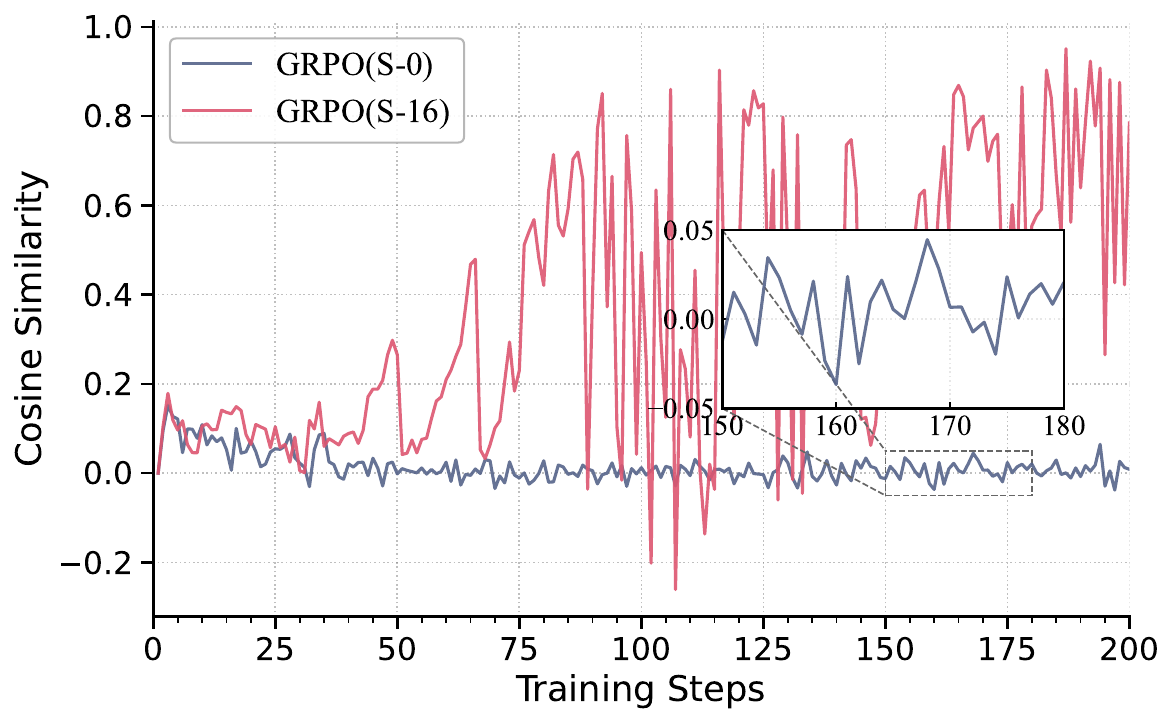}
    \caption{\small Qwen3-4B}
    \label{fig:app-ct-qwen-4b}
  \end{subfigure}

  \vspace{0.25em}

  \begin{subfigure}[t]{0.49\linewidth}
    \centering
    \includegraphics[width=\linewidth]{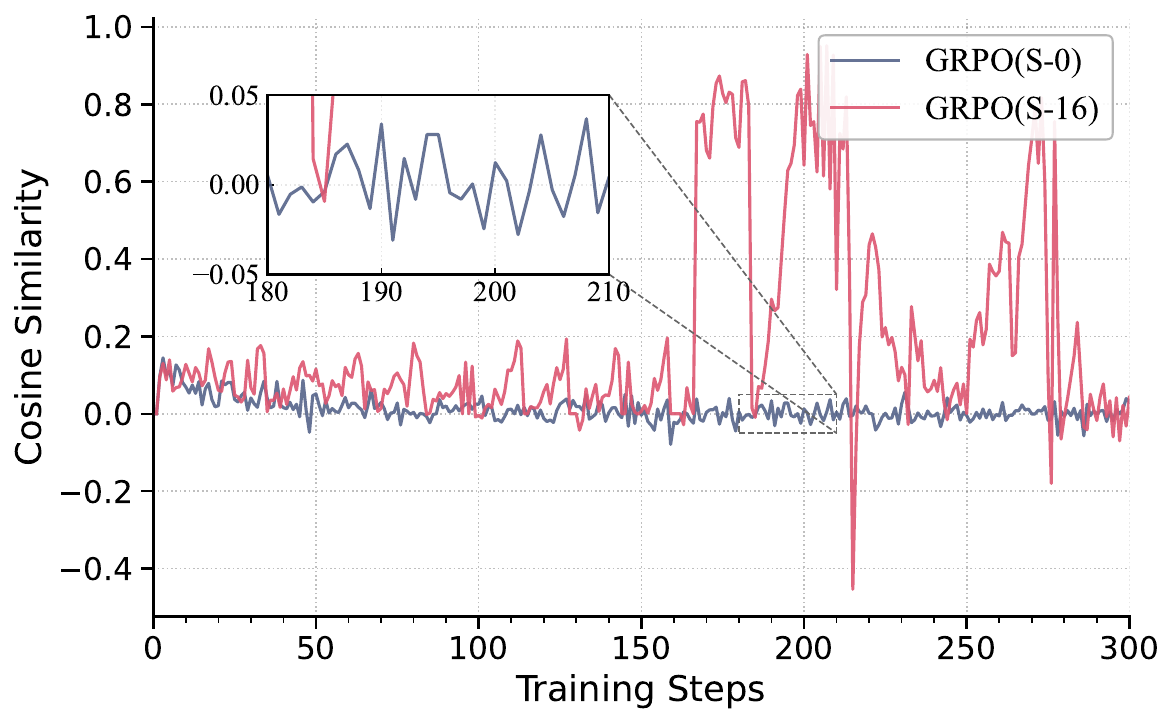}
    \caption{\small Qwen3-8B}
    \label{fig:app-ct-qwen-8b}
  \end{subfigure}\hfill
  \begin{subfigure}[t]{0.49\linewidth}
    \centering
    \includegraphics[width=\linewidth]{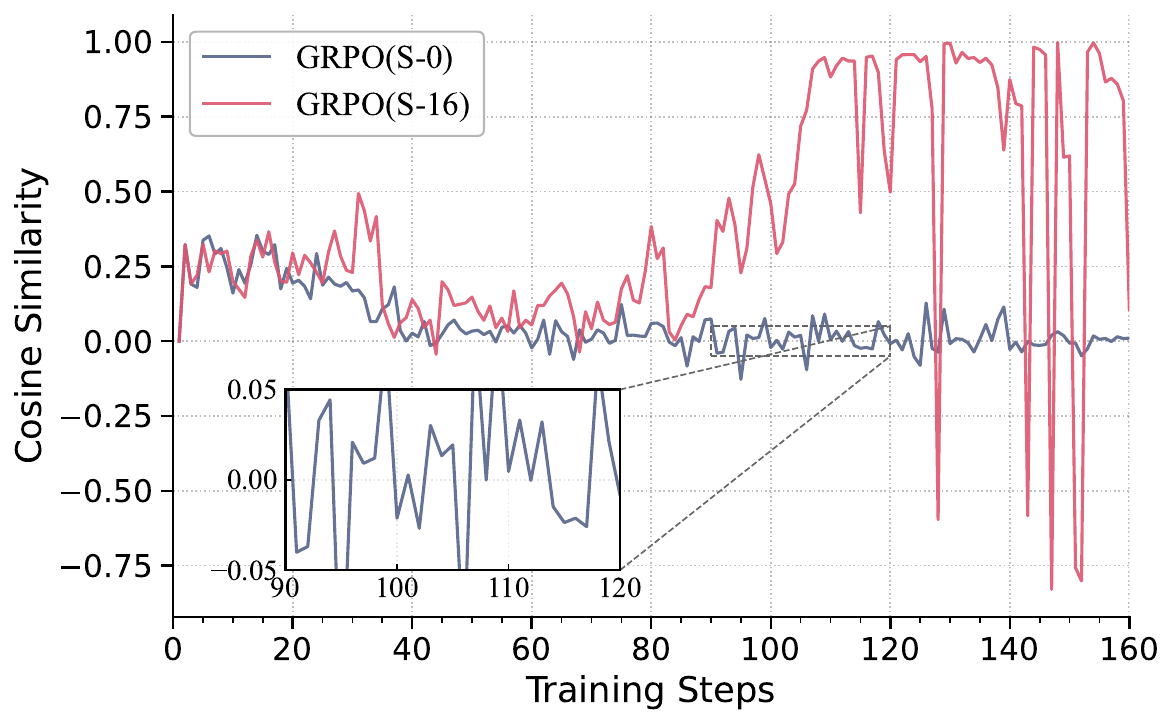}
    \caption{\small Llama-3.2-3B-Instruct}
    \label{fig:app-ct-llama-3b}
  \end{subfigure}

  \caption{
    \textbf{Consecutive-gradient cosine similarity under synchronized vs.\ stale-rollout training across scales.}
    Each panel corresponds to a different model and overlays $c_t$ for synchronized training ($s{=}0$) and stale-rollout training ($s{=}16$).
  }
  \label{fig:app-ct-s0-vs-s16-2x2}
\end{figure}

\section{Formal Convergence Analysis for Asynchronous GRPO}
\label{app:async-convergence}

\subsection{Setup}

Let $L$ denote a differentiable surrogate objective to be \emph{maximized}.
Consider the asynchronous (stale-rollout) stochastic gradient ascent iteration
\begin{equation}
\theta_{t+1} = \theta_t + \eta\,\widehat g_t,
\label{eq:update}
\end{equation}
where $\widehat g_t$ is a stochastic gradient estimator constructed from rollouts generated by a possibly stale
behavior policy (e.g., $\pi_{\theta_{t-\tau_t}}$).

Throughout, we use the standard decomposition
\begin{equation}
\widehat g_t = \nabla L(\theta_t) + b_t + \xi_t,
\label{eq:grad-decomp}
\end{equation}
where $b_t$ denotes the staleness-induced bias and $\xi_t$ is a zero-mean noise term.

\subsection{Assumptions}

\paragraph{Assumption A.1 (Smoothness).}
$L$ is $L$-smooth: for all $\theta,\theta'\in\mathbb{R}^d$,
\[
\|\nabla L(\theta)-\nabla L(\theta')\| \le L\|\theta-\theta'\|.
\]

\paragraph{Assumption A.2 (Noise).}
There exists $\sigma^2>0$ such that
\[
\mathbb{E}[\xi_t\mid\mathcal{F}_t]=0,
\qquad
\mathbb{E}\!\left[\|\xi_t\|^2\mid\mathcal{F}_t\right]\le\sigma^2 .
\]

\paragraph{Assumption A.3 (Bounded objective).}
$L^\star := \sup_{\theta} L(\theta) < \infty$.

\paragraph{Assumption A.4 (Stepsize).}
The stepsize is constant and satisfies $\eta \le \frac{1}{4L}$.

\subsection{Alignment-Aware Convergence}

\begin{theorem}[Alignment-Aware Convergence under Staleness]
\label{thm:align-aware}
Under Assumptions A.1--A.4, the iterates generated by \eqref{eq:update}--\eqref{eq:grad-decomp} satisfy, for any $T\ge1$,
\begin{align}
\min_{0\le t\le T-1}\mathbb{E}\|\nabla L(\theta_t)\|^2
\;\le\;
&\frac{2(L^\star-L(\theta_0))}{\eta T}
+2L\eta\sigma^2
+\frac{4L\eta}{T}\sum_{t=0}^{T-1}\mathbb{E}\|b_t\|^2
\nonumber\\
&\quad
-\frac{2}{T}\sum_{t=0}^{T-1}\mathbb{E}\langle \nabla L(\theta_t), b_t\rangle .
\label{eq:align-bound}
\end{align}
\end{theorem}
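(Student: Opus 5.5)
The proof is the standard nonconvex SGD descent argument, written for \emph{ascent} and with the bias kept explicit rather than bounded away. I take $\mathcal{F}_t$ to be the $\sigma$-algebra generated by the history up to iterate $\theta_t$, so that $\theta_t$, $\nabla L(\theta_t)$ and the staleness bias $b_t$ (determined by the lagged parameters $\theta_{t-\tau_t}$ and $\theta_t$) are $\mathcal{F}_t$-measurable. Under this convention only the noise $\xi_t$ is random given $\mathcal{F}_t$. This measurability of $b_t$ is the one modelling point I would state explicitly, since the cross-term cancellation below depends on it.

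\textbf{Step 1 (one-step ascent inequality).} By Assumption A.1, the quadratic lower bound for an $L$-smooth function gives
\[
L(\theta_{t+1}) \ge L(\theta_t) + \eta\langle \nabla L(\theta_t),\widehat g_t\rangle - \tfrac{L\eta^2}{2}\|\widehat g_t\|^2 .
\]
Substitute the decomposition \eqref{eq:grad-decomp} and take $\mathbb{E}[\cdot\mid\mathcal{F}_t]$. By Assumption A.2, the linear term becomes $\|\nabla L(\theta_t)\|^2+\langle\nabla L(\theta_t),b_t\rangle$. For the quadratic term, the cross term with $\xi_t$ vanishes, so
\[
\mathbb{E}\|\widehat g_t\|^2 \le \|\nabla L(\theta_t)+b_t\|^2+\sigma^2 \le 2\|\nabla L(\theta_t)\|^2+2\|b_t\|^2+\sigma^2 .
\]

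\textbf{Step 2 (collecting terms).} This yields
\[
\mathbb{E}L(\theta_{t+1}) \ge \mathbb{E}L(\theta_t) + \eta(1-L\eta)\,\mathbb{E}\|\nabla L(\theta_t)\|^2 + \eta\,\mathbb{E}\langle\nabla L(\theta_t),b_t\rangle - L\eta^2\,\mathbb{E}\|b_t\|^2 - \tfrac{L\eta^2\sigma^2}{2}.
\]
Assumption A.4 gives $1-L\eta\ge 3/4\ge 1/2$, so the coefficient of the gradient norm can be lower bounded by $\eta/2$.

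\textbf{Step 3 (telescoping).} Sum over $t=0,\dots,T-1$ and bound $\mathbb{E}L(\theta_T)\le L^\star$ using Assumption A.3. Then rearrange to isolate $\frac{\eta}{2}\sum_t\mathbb{E}\|\nabla L(\theta_t)\|^2$ and divide by $\eta T/2$. Finally, use that the minimum is at most the average. This produces
\[
\frac{2(L^\star-L(\theta_0))}{\eta T} + L\eta\sigma^2 + \frac{2L\eta}{T}\sum_t\mathbb{E}\|b_t\|^2 - \frac{2}{T}\sum_t\mathbb{E}\langle\nabla L(\theta_t),b_t\rangle .
\]
The coefficients on $\sigma^2$ and on the bias energy here are no larger than those in \eqref{eq:align-bound}, and the inner-product term matches it exactly. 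Since $\sigma^2$ and $\mathbb{E}\|b_t\|^2$ are nonnegative, the stated bound follows.

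\textbf{Main obstacle.} None of the algebra is delicate. The only real care needed is to keep the inner-product term exact rather than bounding it by Young's inequality, because its sign is what Theorem~\ref{thm:align_async_0} later exploits through the linearization $b_t\approx\eta B_t\widehat g_{t-1}$. The other point to get right is justifying that $b_t$ is $\mathcal{F}_t$-measurable, so that $\mathbb{E}\langle b_t,\xi_t\rangle=0$. If $b_t$ were allowed to depend on the fresh rollout noise, I would instead absorb the correlation into an extra $\sigma^2$ term via $\|a+\xi\|^2\le 2\|a\|^2+2\|\xi\|^2$. The factor-2 slack in the stated constants, $4L\eta$ and $2L\eta\sigma^2$ versus the $2L\eta$ and $L\eta\sigma^2$ obtained above, accommodates exactly this.
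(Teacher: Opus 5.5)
Your proposal is correct and follows the paper's proof essentially step for step: the ascent form of the $L$-smoothness inequality, conditional expectation with $\mathbb{E}[\|\widehat g_t\|^2\mid\mathcal{F}_t]\le 2\|\nabla L(\theta_t)\|^2+2\|b_t\|^2+\sigma^2$, the step-size condition $\eta\le 1/(4L)$ to keep a coefficient $\eta/2$ on the gradient norm, telescoping against $L^\star$, and min-versus-average. Your remark that this route gives the sharper constants $L\eta\sigma^2$ and $2L\eta/T$ holds equally for the paper's own per-step inequality, whose stated $2L\eta\sigma^2$ and $4L\eta/T$ are simply looser. Your fallback via $\|a+\xi\|^2\le 2\|a\|^2+2\|\xi\|^2$ needs $1-2L\eta\ge 1/2$, which is exactly $\eta\le 1/(4L)$. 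It reproduces the stated constants exactly without requiring $b_t$ to be $\mathcal{F}_t$-measurable.
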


\paragraph{Proof.}
By $L$-smoothness,
\[
L(\theta_{t+1})
\ge
L(\theta_t)
+\eta\langle \nabla L(\theta_t),\widehat g_t\rangle
-\frac{L\eta^2}{2}\|\widehat g_t\|^2 .
\]
Taking conditional expectation and using \eqref{eq:grad-decomp},
\[
\mathbb{E}\!\left[\langle \nabla L(\theta_t),\widehat g_t\rangle\mid\mathcal{F}_t\right]
=
\|\nabla L(\theta_t)\|^2+\langle \nabla L(\theta_t),b_t\rangle .
\]
Moreover,
\[
\mathbb{E}\!\left[\|\widehat g_t\|^2\mid\mathcal{F}_t\right]
\le
2\|\nabla L(\theta_t)\|^2+2\|b_t\|^2+\sigma^2 .
\]
Combining the above and using $\eta\le 1/(4L)$ yields
\[
\frac{\eta}{2}\mathbb{E}\|\nabla L(\theta_t)\|^2
\le
\mathbb{E}[L(\theta_{t+1})-L(\theta_t)]
-\eta\mathbb{E}\langle \nabla L(\theta_t),b_t\rangle
+L\eta^2\mathbb{E}\|b_t\|^2
+\frac{L\eta^2}{2}\sigma^2 .
\]
Summing over $t=0,\dots,T-1$, telescoping, and using $L(\theta_T)\le L^\star$ gives
\[
\frac{1}{T}\sum_{t=0}^{T-1}\mathbb{E}\|\nabla L(\theta_t)\|^2
\le
\frac{2(L^\star-L(\theta_0))}{\eta T}
+2L\eta\sigma^2
+\frac{4L\eta}{T}\sum_{t=0}^{T-1}\mathbb{E}\|b_t\|^2
-\frac{2}{T}\sum_{t=0}^{T-1}\mathbb{E}\langle \nabla L(\theta_t),b_t\rangle .
\]
The result follows by $\min_t a_t\le \frac{1}{T}\sum_t a_t$.
\hfill$\square$

\subsection{Temporal Persistence and Gradient Alignment}

We now express the alignment term in \eqref{eq:align-bound} through temporal coupling induced by staleness.

\paragraph{Assumption A.5 (Bounded staleness and drift).}
At time $t$, the gradient is computed using rollouts generated at $\theta_{t-\tau_t}$ with $\tau_t\le S$, so that
\[
\theta_t-\theta_{t-\tau_t}=\sum_{k=1}^{\tau_t}\eta\,\widehat g_{t-k}.
\]

\paragraph{Assumption A.6 (Small-drift linearization).}
There exist matrices $B_t$ and remainders $r_t$ such that
\[
b_t = B_t(\theta_t-\theta_{t-\tau_t}) + r_t,
\qquad
\mathbb{E}\|r_t\|\le \varepsilon_t .
\]

\paragraph{Assumption A.7 (Temporal persistence).}
There exists $\lambda_t\ge0$ such that
\[
\mathbb{E}\!\left[\widehat g_{t-1}^\top B_t \widehat g_{t-1}\right]
\ge
\lambda_t\,\mathbb{E}\|\widehat g_{t-1}\|^2 .
\]

\paragraph{Assumption A.8 (One-step alignment stability).}
There exist $\rho_t\in[0,1]$ and $\delta_t\ge0$ such that
\[
\mathbb{E}\!\left[\langle \nabla L(\theta_t), B_t\widehat g_{t-1}\rangle\right]
\ge
\rho_t\,\mathbb{E}\!\left[\widehat g_{t-1}^\top B_t \widehat g_{t-1}\right]
-\delta_t .
\]

\paragraph{Discussion of assumptions and empirical validation.}
Assumptions A.5–A.8 formalize the minimal structure needed to capture how staleness alters optimization dynamics in asynchronous GRPO. Assumption A.5 is purely structural, expressing parameter drift as the accumulation of intervening updates under bounded staleness, and holds by construction in practical asynchronous pipelines. Assumption A.6 introduces a local linearization of the staleness-induced bias, which is standard in delayed-gradient analyses and is justified as long as parameter drift remains moderate—precisely the regime prior to collapse. Assumptions A.7 and A.8 characterize temporal persistence: they require that the bias induced by stale rollouts retains nontrivial alignment with recent update directions and that this alignment does not vanish abruptly from one step to the next. Importantly, these assumptions are directional and expectation-based, rather than global or worst-case.

Our empirical results provide direct evidence supporting these assumptions. Across model scales and benchmarks, asynchronous GRPO consistently exhibits persistently elevated cosine similarity between consecutive gradients, indicating that stale gradients systematically reinforce previous update directions (supporting Assumption A.7). Moreover, the smooth evolution of alignment prior to collapse suggests that this persistence is stable across successive steps, consistent with Assumption A.8. Finally, the sharp rise in gradient alignment immediately preceding training collapse coincides with large parameter drift and breakdown of local linear behavior, aligning with the interpretation that Assumption A.6 holds in the stable regime and fails precisely when instability emerges. Together, these observations validate the assumptions as accurate descriptors of the dynamics leading to collapse, rather than as restrictive conditions imposed for analytical convenience.

\begin{lemma}[Alignment Expansion]
\label{lem:alignment-expansion}
Under Assumptions A.5--A.6,
\[
\langle \nabla L(\theta_t), b_t\rangle
=
\eta\langle \nabla L(\theta_t), B_t\widehat g_{t-1}\rangle
+
\eta\sum_{k=2}^{\tau_t}\langle \nabla L(\theta_t), B_t\widehat g_{t-k}\rangle
+
\langle \nabla L(\theta_t), r_t\rangle .
\]
\end{lemma}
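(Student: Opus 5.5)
The identity is a direct substitution with no estimates, so I would obtain it by unfolding the definitions in Assumptions A.5 and A.6 and using bilinearity of the inner product.

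\emph{Step 1: drift representation.} By Assumption A.6, $b_t = B_t(\theta_t-\theta_{t-\tau_t}) + r_t$. By Assumption A.5, the drift is the sum of the intervening updates, $\theta_t-\theta_{t-\tau_t}=\sum_{k=1}^{\tau_t}\eta\,\widehat g_{t-k}$. This follows from the update rule \eqref{eq:update} by telescoping $\theta_{t-k+1}-\theta_{t-k}=\eta\,\widehat g_{t-k}$ for $k=1,\dots,\tau_t$.

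\emph{Step 2: apply the linear operator.} Since $B_t$ is linear, substituting Step 1 gives
\[
b_t=\eta\sum_{k=1}^{\tau_t}B_t\widehat g_{t-k}+r_t .
\]

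\emph{Step 3: pair with the true gradient.} Take the inner product with $\nabla L(\theta_t)$ and expand by bilinearity. Then isolate the $k=1$ summand, $\eta\langle\nabla L(\theta_t),B_t\widehat g_{t-1}\rangle$, from the remaining sum over $k=2,\dots,\tau_t$. The remainder contributes $\langle\nabla L(\theta_t),r_t\rangle$. This is exactly the claimed identity.

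\emph{Edge cases.} The only point needing care is the convention for small staleness, which is bookkeeping rather than a real obstacle. If $\tau_t=1$, the sum over $k\ge2$ is empty and equals zero. If $\tau_t=0$, the drift vanishes, so $b_t=r_t$; the isolated $k=1$ term then does not literally appear, and the identity should be read with the convention $\tau_t\ge1$ whenever staleness is present. I would state this explicitly rather than change the lemma.

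No expectations are taken, so the identity holds pathwise. It can then be fed into Assumptions A.7--A.8 to turn the last term of Theorem~\ref{thm:align-aware} into the $\rho_t\lambda_t$ term.
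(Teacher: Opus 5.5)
Your proposal is correct and follows the paper's proof, which substitutes the drift expansion of Assumption A.5 into the linearization of Assumption A.6 and expands the inner product by bilinearity. Your caveat is also valid and left implicit in the paper: the isolated $k=1$ term requires $\tau_t\ge 1$, since for $\tau_t=0$ the stated identity would wrongly include $\eta\langle \nabla L(\theta_t), B_t\widehat g_{t-1}\rangle$.
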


\paragraph{Proof.}
Substitute the drift expansion into $b_t$ and expand the inner product.
\hfill$\square$

\begin{theorem}[Convergence of Asynchronous GRPO with Alignment Persistence]
\label{thm:align-async}
Under Assumptions A.1--A.8, the iterates satisfy
\begin{align}
\min_{0\le t\le T-1}\mathbb{E}\|\nabla L(\theta_t)\|^2
\;\le\;
&\frac{2(L^\star-L(\theta_0))}{\eta T}
+2L\eta\sigma^2
+\frac{4L\eta}{T}\sum_{t=0}^{T-1}\mathbb{E}\|b_t\|^2
\nonumber\\
&\quad
-\frac{2}{T}\sum_{t=0}^{T-1}\eta\,\rho_t\lambda_t
\mathbb{E}\|\widehat g_{t-1}\|^2
+\mathrm{Err}(T),
\label{eq:final-async-bound}
\end{align}
where
\[
\mathrm{Err}(T)
=
\frac{2}{T}\sum_{t=0}^{T-1}
\Bigg(
\eta\sum_{k=2}^{\tau_t}
\mathbb{E}\big|\langle \nabla L(\theta_t), B_t\widehat g_{t-k}\rangle\big|
+
\mathbb{E}\|\nabla L(\theta_t)\|\|r_t\|
+
\eta\delta_t
\Bigg).
\]
\end{theorem}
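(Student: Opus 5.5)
Theorem~\ref{thm:align-async} follows from Theorem~\ref{thm:align-aware} by lower-bounding the alignment term $\mathbb{E}\langle \nabla L(\theta_t), b_t\rangle$, which enters \eqref{eq:align-bound} with a negative sign. The other terms ($\frac{2(L^\star-L(\theta_0))}{\eta T}$, $2L\eta\sigma^2$, and the $\frac{4L\eta}{T}\sum_t\mathbb{E}\|b_t\|^2$ penalty) carry over verbatim. It therefore suffices to show, for each $t$,
\[
-\mathbb{E}\langle \nabla L(\theta_t), b_t\rangle \le -\eta\rho_t\lambda_t\,\mathbb{E}\|\widehat g_{t-1}\|^2 + e_t,
\]
where
\[
e_t \triangleq \eta\sum_{k=2}^{\tau_t}\mathbb{E}\big|\langle \nabla L(\theta_t), B_t\widehat g_{t-k}\rangle\big| + \mathbb{E}\|\nabla L(\theta_t)\|\|r_t\| + \eta\delta_t .
\]
Multiplying by $2/T$ and summing then produces exactly the $\rho_t\lambda_t$ term together with $\mathrm{Err}(T)=\frac{2}{T}\sum_t e_t$.

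To get this per-step bound, first take expectations in Lemma~\ref{lem:alignment-expansion}, which splits $\mathbb{E}\langle \nabla L(\theta_t), b_t\rangle$ into three pieces. The first piece is the one-step term $\eta\,\mathbb{E}\langle \nabla L(\theta_t), B_t\widehat g_{t-1}\rangle$. By Assumption A.8 it is at least $\eta\rho_t\,\mathbb{E}[\widehat g_{t-1}^\top B_t\widehat g_{t-1}] - \eta\delta_t$. Since $\rho_t\ge 0$, Assumption A.7 then bounds this from below by $\eta\rho_t\lambda_t\,\mathbb{E}\|\widehat g_{t-1}\|^2 - \eta\delta_t$. The second piece is the older-lag sum $\eta\sum_{k\ge2}\mathbb{E}\langle\nabla L(\theta_t), B_t\widehat g_{t-k}\rangle$. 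Each summand is at least $-\mathbb{E}|\cdot|$, so the sum is bounded below by minus its absolute-value version. The third piece is the remainder $\mathbb{E}\langle\nabla L(\theta_t), r_t\rangle$. Cauchy--Schwarz gives $\ge -\mathbb{E}\|\nabla L(\theta_t)\|\|r_t\|$. Adding the three lower bounds and negating gives the claimed inequality with exactly the stated $e_t$.

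No step here is really hard; the only delicate point is bookkeeping. Two things need care. First, the sign conventions of Theorem~\ref{thm:align-aware} must be tracked so that the term entering with a minus sign is the one being lower-bounded. Second, Assumption A.8 needs to be applied in expectation, before or together with A.7, and $\rho_t\ge0$ is what allows the chaining. I would also note two edge cases. When $\tau_t\le 1$, the lag sum is empty. When $\tau_t=0$, $b_t=r_t$ and the one-step term is formally absent; there one can either read the statement with $\rho_t\lambda_t=0$ at such $t$, or rely on A.5 being stated for $\tau_t\ge1$. I expect this to be the only place that needs a remark.
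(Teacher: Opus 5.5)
Your proposal is correct and is the paper's argument: substitute Lemma~\ref{lem:alignment-expansion} into the alignment term of Theorem~\ref{thm:align-aware}, lower-bound the one-step term by A.8 followed by A.7 (using $\rho_t\ge0$), and absorb the older-lag sum and the remainder into $\mathrm{Err}(T)$ via absolute values and Cauchy--Schwarz. Your $\tau_t=0$ caveat is a genuine point that the paper's one-line proof skips, and it also covers $t=0$, where $\widehat g_{-1}$ is undefined: there $b_t=r_t$, so the lemma's one-step term is spurious and A.6 leaves $B_t$ free, which lets A.7--A.8 hold with arbitrarily large $\lambda_t$ (e.g.\ $B_t=cI$ with positively correlated consecutive gradients); the bound therefore really does need $\tau_t\ge1$ or the convention $\rho_t\lambda_t=0$ at such steps.
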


\paragraph{Proof.}
Apply Lemma~\ref{lem:alignment-expansion} inside Theorem~\ref{thm:align-aware}.
The leading term is lower bounded using Assumptions A.7--A.8, while the remaining terms are controlled
by absolute values and collected into $\mathrm{Err}(T)$.
\hfill$\square$

\section{Advantage of \textsc{GAC}: Bias Suppression via Alignment Control}
\label{app:prop}

The convergence bounds in Theorems~\ref{thm:align-aware}--\ref{thm:align-async} show that staleness impacts optimization primarily through the bias term $b_t$.
In particular, the penalty $\sum_t \mathbb{E}\|b_t\|^2$ is always detrimental and can dominate the bound under large staleness.
Under temporal persistence (Assumptions~A.5--A.8), the leading component of $b_t$ is aligned with the previous update direction, which motivates \textsc{GAC}'s strategy of attenuating progress along this stale-aligned direction.
We formalize this intuition by showing that projecting $b_t$ away from $\mathrm{span}(\widehat g_{t-1})$ yields a strict reduction in the squared bias magnitude, up to higher-order approximation terms.

\begin{proposition}[Bias reduction via stale-direction projection]
\label{prop:bias_reduction_formal}
Fix an iteration $t\ge 1$ and let $\widehat g_{t-1}$ be the previous stochastic gradient.
Define the reference direction
\[
u_{t-1} \;\triangleq\;
\begin{cases}
\widehat g_{t-1}/\|\widehat g_{t-1}\|_2, & \widehat g_{t-1}\neq 0,\\
0, & \widehat g_{t-1}=0.
\end{cases}
\]
Let $P^\perp_{t-1}\triangleq I-u_{t-1}u_{t-1}^\top$ denote the orthogonal projector onto $\mathrm{span}(u_{t-1})^\perp$.

Assume the staleness-induced bias $b_t$ admits the \emph{one-step persistent linearization}
\begin{equation}
b_t \;=\; \eta_{t-1} B_t \widehat g_{t-1} + r_t,
\label{eq:one_step_bias_lin}
\end{equation}
where $B_t\in\mathbb{R}^{d\times d}$ is (possibly random) and $r_t$ is a remainder term.
Assume further the following \emph{temporal persistence} condition holds:
\begin{equation}
\widehat g_{t-1}^\top B_t \widehat g_{t-1}
\;\ge\;
\lambda_t \|\widehat g_{t-1}\|_2^2
\qquad\text{almost surely},
\label{eq:temporal_persistence_strong}
\end{equation}
for some $\lambda_t>0$.

Define the \emph{projected (effective) bias} as
\[
b_t^\perp \;\triangleq\; P^\perp_{t-1} b_t.
\]
Then the following inequality holds:
\begin{align}
\mathbb{E}\|b_t^\perp\|_2^2
\;\le\;
\mathbb{E}\|b_t\|_2^2
\;-\;
\eta_{t-1}^2\lambda_t^2\,\mathbb{E}\|\widehat g_{t-1}\|_2^2
\;+\;
2\eta_{t-1}\lambda_t\,\mathbb{E}\!\left[\|\widehat g_{t-1}\|_2\,\|r_t\|_2\right].
\label{eq:bias_reduction_explicit}
\end{align}
In particular, if $\mathbb{E}\|r_t\|_2^2 \le \epsilon_t^2$ for some $\epsilon_t\ge 0$, then
\begin{align}
\mathbb{E}\|b_t^\perp\|_2^2
\;\le\;
\mathbb{E}\|b_t\|_2^2
\;-\;
\eta_{t-1}^2\lambda_t^2\,\mathbb{E}\|\widehat g_{t-1}\|_2^2
\;+\;
2\eta_{t-1}\lambda_t\,\epsilon_t\,\sqrt{\mathbb{E}\|\widehat g_{t-1}\|_2^2}.
\label{eq:bias_reduction_eps}
\end{align}
\end{proposition}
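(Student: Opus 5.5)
The plan is to prove the inequality \emph{pointwise} (for every realization) and then take expectations; everything reduces to the Pythagorean identity for the rank-one projector $P^\perp_{t-1}$ plus a lower bound on the component of $b_t$ along $u_{t-1}$.

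\emph{Degenerate case.} First dispose of $\widehat g_{t-1}=0$. Then $u_{t-1}=0$ and $P^\perp_{t-1}=I$, so $b_t^\perp=b_t$. Both correction terms on the right of \eqref{eq:bias_reduction_explicit} vanish, so the pointwise inequality holds with equality.

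\emph{Main case.} For $\widehat g_{t-1}\neq0$, since $u_{t-1}$ is a unit vector, Pythagoras gives
\[
\|b_t^\perp\|_2^2=\|b_t\|_2^2-\langle b_t,u_{t-1}\rangle^2 .
\]
It therefore suffices to show, pointwise,
\[
\langle b_t,u_{t-1}\rangle^2\ \ge\ \eta_{t-1}^2\lambda_t^2\|\widehat g_{t-1}\|_2^2-2\eta_{t-1}\lambda_t\|\widehat g_{t-1}\|_2\|r_t\|_2 .
\]
Substituting the linearization \eqref{eq:one_step_bias_lin} gives
\[
\langle b_t,u_{t-1}\rangle=\eta_{t-1}\,\frac{\widehat g_{t-1}^\top B_t\widehat g_{t-1}}{\|\widehat g_{t-1}\|_2}+\langle r_t,u_{t-1}\rangle .
\]
By the persistence condition \eqref{eq:temporal_persistence_strong} (with $\eta_{t-1}\ge0$) and Cauchy--Schwarz, $\langle b_t,u_{t-1}\rangle\ge a-\|r_t\|_2$, where $a\triangleq\eta_{t-1}\lambda_t\|\widehat g_{t-1}\|_2\ge0$.

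Converting this one-sided linear bound into a bound on the square is the only delicate step, and I would handle it with two cases.
\begin{itemize}
\item If $a\ge\|r_t\|_2$, the lower bound $a-\|r_t\|_2$ is nonnegative. Squaring gives $\langle b_t,u_{t-1}\rangle^2\ge(a-\|r_t\|_2)^2\ge a^2-2a\|r_t\|_2$.
\item If $a<\|r_t\|_2$, then $a^2-2a\|r_t\|_2=a(a-2\|r_t\|_2)\le0\le\langle b_t,u_{t-1}\rangle^2$.
\end{itemize}
In both cases the target pointwise inequality holds.

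\emph{Taking expectations.} Taking expectations of the pointwise inequality yields \eqref{eq:bias_reduction_explicit}. For \eqref{eq:bias_reduction_eps}, apply Cauchy--Schwarz in expectation:
\[
\mathbb{E}\bigl[\|\widehat g_{t-1}\|_2\|r_t\|_2\bigr]\le\sqrt{\mathbb{E}\|\widehat g_{t-1}\|_2^2}\,\sqrt{\mathbb{E}\|r_t\|_2^2}\le\epsilon_t\sqrt{\mathbb{E}\|\widehat g_{t-1}\|_2^2}.
\]

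No earlier result beyond the definitions in the statement is needed. The main obstacle is the sign issue in squaring the lower bound, which the case split resolves. I would also note explicitly that we need $\eta_{t-1}\ge0$ and integrability of $\|b_t\|^2$, $\|\widehat g_{t-1}\|^2$ and $\|r_t\|^2$ so that the expectations are finite.
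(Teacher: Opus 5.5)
Your proof is correct and follows the paper's route: the Pythagorean identity $\|b_t^\perp\|_2^2=\|b_t\|_2^2-\langle b_t,u_{t-1}\rangle^2$, then a lower bound on $\langle b_t,u_{t-1}\rangle$ from the linearization, the persistence condition and Cauchy--Schwarz, and finally Cauchy--Schwarz in expectation for the $\epsilon_t$ version. Your case split on $a\ge\|r_t\|_2$ versus $a<\|r_t\|_2$ (with $a=\eta_{t-1}\lambda_t\|\widehat g_{t-1}\|_2$) repairs a gap in the paper: the paper squares the one-sided bound $\langle b_t,u_{t-1}\rangle\ge a-\|r_t\|_2$ directly, so its intermediate claim $\langle b_t,u_{t-1}\rangle^2\ge(a-\|r_t\|_2)^2$ can fail when the right side is negative, whereas the final inequality still holds there because $a^2-2a\|r_t\|_2\le 0$, as you observe.
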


\begin{proof}
If $\widehat g_{t-1}=0$, then $u_{t-1}=0$ and $P^\perp_{t-1}=I$, so $b_t^\perp=b_t$ and
\eqref{eq:bias_reduction_explicit} holds trivially. We assume $\widehat g_{t-1}\neq 0$ below.

\paragraph{Step 1: Pythagorean identity for orthogonal projection.}
Since $P^\perp_{t-1}$ is an orthogonal projector, we have the decomposition
\[
b_t \;=\; (u_{t-1}u_{t-1}^\top)b_t \;+\; (I-u_{t-1}u_{t-1}^\top)b_t
\;=\; (u_{t-1}u_{t-1}^\top)b_t \;+\; b_t^\perp,
\]
and the two components are orthogonal. Therefore,
\begin{equation}
\|b_t\|_2^2 \;=\; \|b_t^\perp\|_2^2 \;+\; \|(u_{t-1}u_{t-1}^\top)b_t\|_2^2.
\label{eq:pythagoras}
\end{equation}
Moreover,
\[
(u_{t-1}u_{t-1}^\top)b_t
= \langle b_t,u_{t-1}\rangle u_{t-1},
\qquad
\|(u_{t-1}u_{t-1}^\top)b_t\|_2^2
= \langle b_t,u_{t-1}\rangle^2.
\]
Plugging this into \eqref{eq:pythagoras} yields the exact identity
\begin{equation}
\|b_t^\perp\|_2^2
\;=\;
\|b_t\|_2^2
\;-\;
\langle b_t,u_{t-1}\rangle^2
\;=\;
\|b_t\|_2^2
\;-\;
\frac{\langle b_t,\widehat g_{t-1}\rangle^2}{\|\widehat g_{t-1}\|_2^2}.
\label{eq:bias_proj_identity}
\end{equation}

\paragraph{Step 2: Lower bound the removed component using persistence.}
Using the one-step linearization \eqref{eq:one_step_bias_lin},
\[
\langle b_t,\widehat g_{t-1}\rangle
=
\eta_{t-1}\,\widehat g_{t-1}^\top B_t \widehat g_{t-1}
\;+\;
\langle r_t,\widehat g_{t-1}\rangle.
\]
By the persistence condition \eqref{eq:temporal_persistence_strong},
\[
\eta_{t-1}\,\widehat g_{t-1}^\top B_t \widehat g_{t-1}
\;\ge\;
\eta_{t-1}\lambda_t\|\widehat g_{t-1}\|_2^2.
\]
Also, by Cauchy--Schwarz,
\[
\langle r_t,\widehat g_{t-1}\rangle \ge -\|r_t\|_2\,\|\widehat g_{t-1}\|_2.
\]
Combining the two displays gives the deterministic bound
\begin{equation}
\langle b_t,\widehat g_{t-1}\rangle
\;\ge\;
\eta_{t-1}\lambda_t\|\widehat g_{t-1}\|_2^2
\;-\;
\|r_t\|_2\,\|\widehat g_{t-1}\|_2.
\label{eq:inner_lower}
\end{equation}

\paragraph{Step 3: Square and normalize.}
Divide \eqref{eq:inner_lower} by $\|\widehat g_{t-1}\|_2$ and square both sides:
\[
\frac{\langle b_t,\widehat g_{t-1}\rangle^2}{\|\widehat g_{t-1}\|_2^2}
\;\ge\;
\Bigl(\eta_{t-1}\lambda_t\|\widehat g_{t-1}\|_2 - \|r_t\|_2\Bigr)^2
=
\eta_{t-1}^2\lambda_t^2\|\widehat g_{t-1}\|_2^2
-2\eta_{t-1}\lambda_t\|\widehat g_{t-1}\|_2\|r_t\|_2
+\|r_t\|_2^2.
\]
Dropping the nonnegative last term $\|r_t\|_2^2$ yields
\begin{equation}
\frac{\langle b_t,\widehat g_{t-1}\rangle^2}{\|\widehat g_{t-1}\|_2^2}
\;\ge\;
\eta_{t-1}^2\lambda_t^2\|\widehat g_{t-1}\|_2^2
-2\eta_{t-1}\lambda_t\|\widehat g_{t-1}\|_2\|r_t\|_2.
\label{eq:removed_lb}
\end{equation}

\paragraph{Step 4: Plug back into the projection identity and take expectation.}
Substitute \eqref{eq:removed_lb} into \eqref{eq:bias_proj_identity}:
\[
\|b_t^\perp\|_2^2
\le
\|b_t\|_2^2
-
\eta_{t-1}^2\lambda_t^2\|\widehat g_{t-1}\|_2^2
+
2\eta_{t-1}\lambda_t\|\widehat g_{t-1}\|_2\|r_t\|_2.
\]
Taking expectation gives \eqref{eq:bias_reduction_explicit}.

Finally, if $\mathbb{E}\|r_t\|_2^2 \le \epsilon_t^2$, then by Cauchy--Schwarz,
\[
\mathbb{E}\!\left[\|\widehat g_{t-1}\|_2\|r_t\|_2\right]
\le
\sqrt{\mathbb{E}\|\widehat g_{t-1}\|_2^2}\;\sqrt{\mathbb{E}\|r_t\|_2^2}
\le
\epsilon_t\sqrt{\mathbb{E}\|\widehat g_{t-1}\|_2^2},
\]
which yields \eqref{eq:bias_reduction_eps}.
\end{proof}

\section{Per-Benchmark Performance Analysis}
\label{app:benchmark-analysis}

Figure~\ref{fig:app-benchmark-breakdown-qwen4b} reports a per-benchmark breakdown of final validation accuracy for Qwen3-4B under stale-rollout training ($s{=}16$).
Across all seven benchmarks, \textsc{GAC} consistently outperforms all off-policy baselines across benchmarks.
Moreover, \textsc{GAC} achieves performance that is highly comparable to synchronized GRPO: the per-benchmark gaps remain small and do not exhibit systematic deterioration under asynchrony.
Importantly, this recovery does not come from trading off learning speed for stability---\textsc{GAC} preserves smooth, stable convergence while maintaining a convergence rate that closely tracks the synchronized reference throughout training.

\begin{figure*}[p]
  \centering
  % ---------------- Row 1 (2 panels): AMC ----------------
  \begin{subfigure}[t]{0.48\textwidth}
    \centering
    \includegraphics[width=\linewidth]{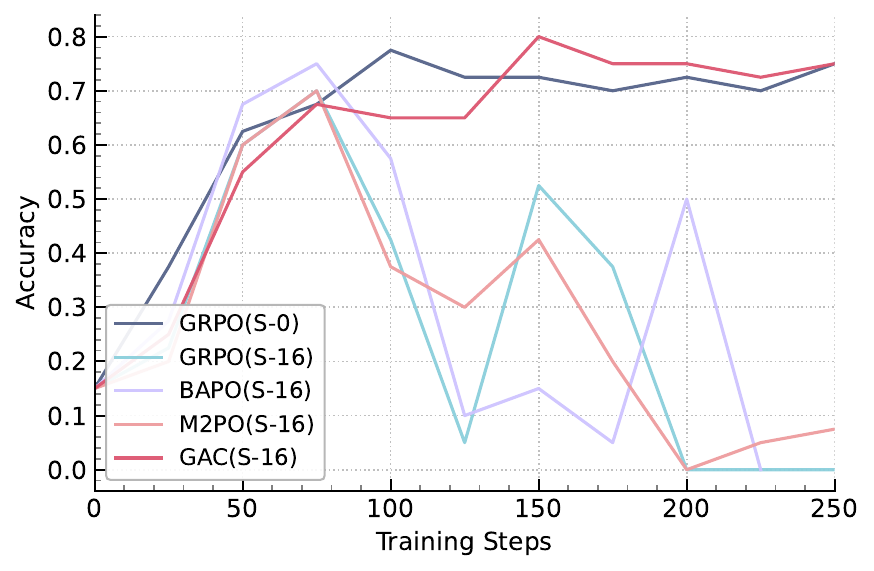}
    \caption{AMC23}
    \label{fig:app-perbench-amc23}
  \end{subfigure}\hfill
  \begin{subfigure}[t]{0.48\textwidth}
    \centering
    \includegraphics[width=\linewidth]{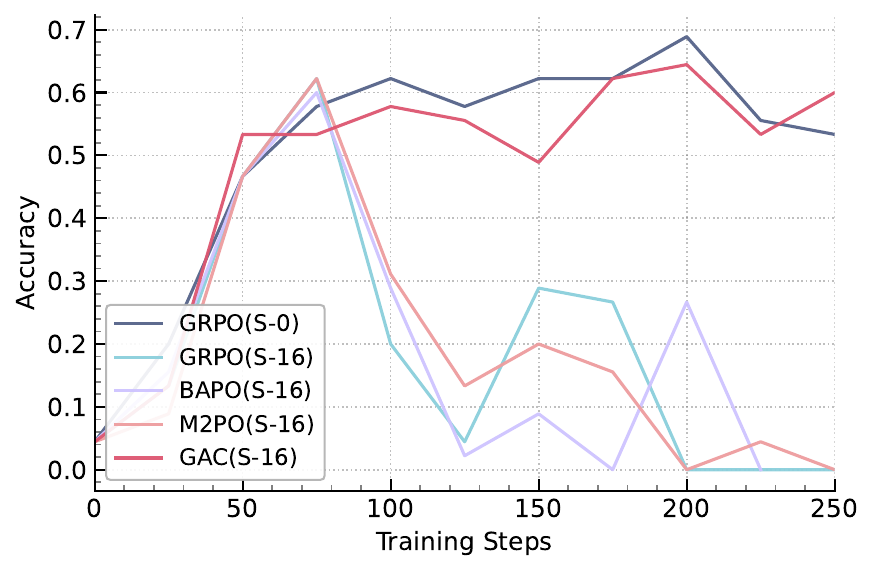}
    \caption{AMC24}
    \label{fig:app-perbench-amc24}
  \end{subfigure}

  \vspace{0.7em}

  % ---------------- Row 2 (2 panels): Olympiad + Minerva ----------------
  \begin{subfigure}[t]{0.48\textwidth}
    \centering
    \includegraphics[width=\linewidth]{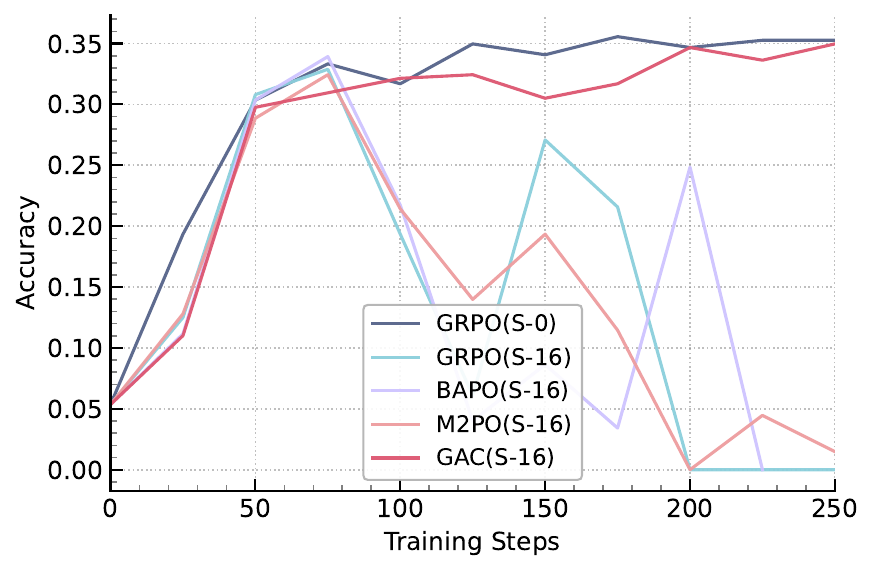}
    \caption{OlympiadBench}
    \label{fig:app-perbench-olymp}
  \end{subfigure}\hfill
  \begin{subfigure}[t]{0.48\textwidth}
    \centering
    \includegraphics[width=\linewidth]{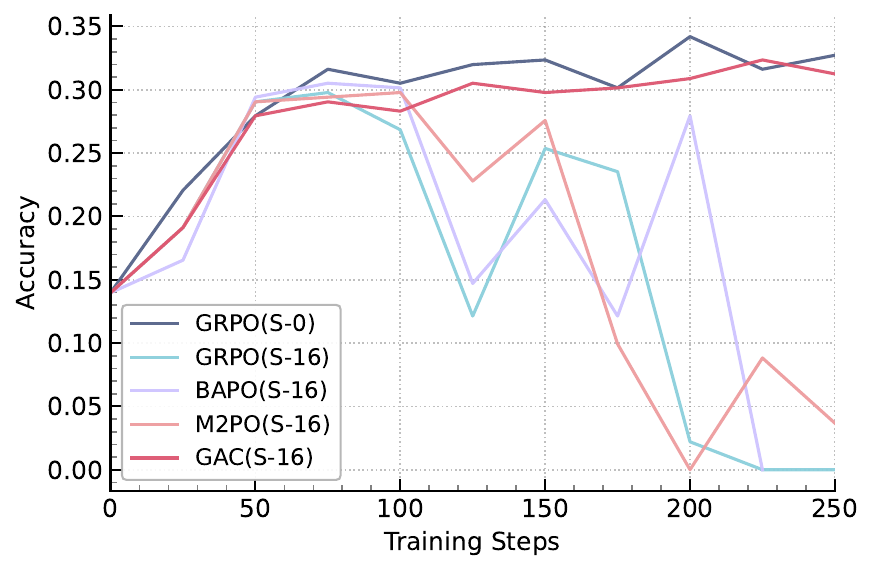}
    \caption{MinervaMath}
    \label{fig:app-perbench-minerva}
  \end{subfigure}

  \vspace{0.7em}

  % ---------------- Row 3 (3 panels): AIME + Math ----------------
  \begin{subfigure}[t]{0.32\textwidth}
    \centering
    \includegraphics[width=\linewidth]{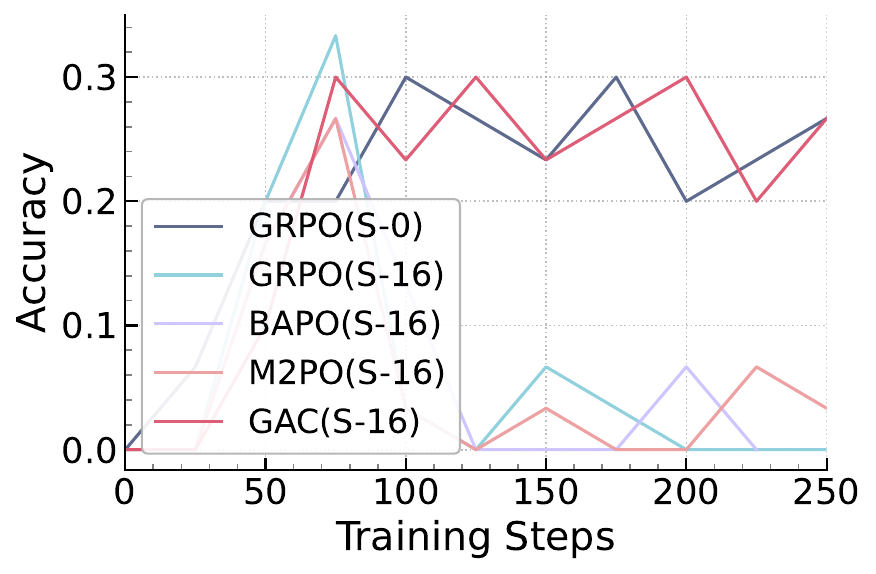}
    \caption{AIME24}
    \label{fig:app-perbench-aime24}
  \end{subfigure}\hfill
  \begin{subfigure}[t]{0.32\textwidth}
    \centering
    \includegraphics[width=\linewidth]{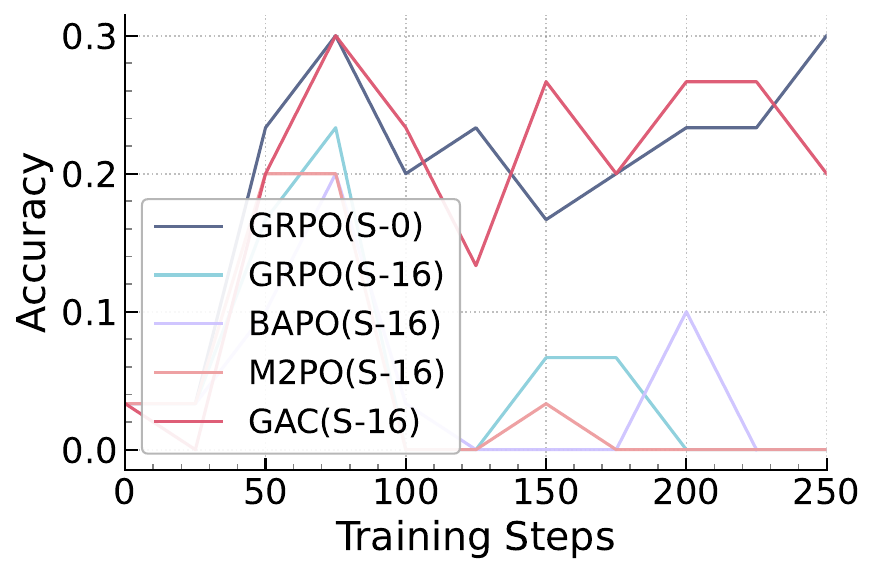}
    \caption{AIME25}
    \label{fig:app-perbench-aime25}
  \end{subfigure}\hfill
  \begin{subfigure}[t]{0.32\textwidth}
    \centering
    \includegraphics[width=\linewidth]{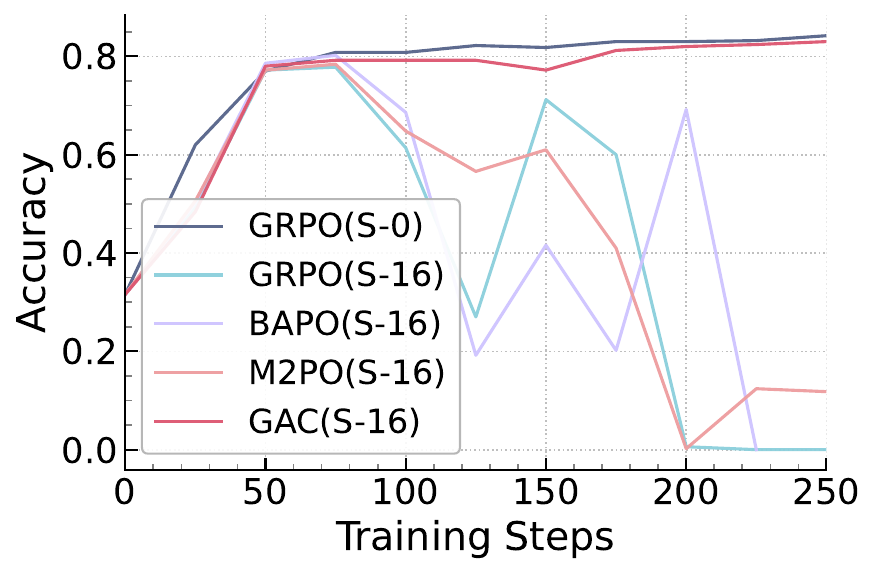}
    \caption{Math500}
    \label{fig:app-perbench-math500}
  \end{subfigure}

  \caption{\textbf{Per-benchmark accuracy breakdown for Qwen3-4B under staleness ($s{=}16$).}
  Each panel reports final validation accuracy on one benchmark, comparing \textsc{GAC} against synchronized GRPO (on-policy reference) and off-policy baselines under stale rollouts.}
  \label{fig:app-benchmark-breakdown-qwen4b}
\end{figure*}

%%%%%%%%%%%%%%%%%%%%%%%%%%%%%%%%%%%%%%%%%%%%%%%%%%%%%%%%%%%%%%%%%%%%%%%%%%%%%%%
%%%%%%%%%%%%%%%%%%%%%%%%%%%%%%%%%%%%%%%%%%%%%%%%%%%%%%%%%%%%%%%%%%%%%%%%%%%%%%%

% \clearpage
% \input{format/checklist}

\end{document}